\begin{document}

%%
%% The "title" command has an optional parameter,
%% allowing the author to define a "short title" to be used in page headers.
\title{Deep Hashing with Semantic Hash Centers for Image Retrieval}

%%
%% The "author" command and its associated commands are used to define
%% the authors and their affiliations.
%% Of note is the shared affiliation of the first two authors, and the
%% "authornote" and "authornotemark" commands
%% used to denote shared contribution to the research.
\author{Li Chen}
% \authornote{Both authors contributed equally to this research.}
\email{cc752424640@buaa.edu.cn}
% \orcid{188-7410-1168}
\affiliation{%
  \institution{Beihang University}
  \state{Beijing}
  \country{China}
}

% \authornotemark[1]
\author{Rui Liu}
% \email{webmaster@marysville-ohio.com}
\affiliation{%
  \institution{Beihang University}
  \state{Beijing}
  \country{China}
}

\author{Yuxiang Zhou}
\affiliation{%
  \institution{ Beijing University of Posts and Telecommunications}
  \state{Beijing}
  \country{China}
}

\author{Xudong Ma}
\email{macaronlin@buaa.edu.cn}
\affiliation{%
  \institution{Beihang University}
  %\city{Haidian}
  \state{Beijing}
  \country{China}
}

\author{Yong Chen}
\email{alphawolf.chen@gmail.com}
\authornote{Yong Chen is the Corresponding author (yong.chen@bupt.edu.cn).}
\affiliation{%
  \institution{ Beijing University of Posts and Telecommunications}
  %\city{Haidian}
  \state{Beijing}
  \country{China}
}

\author{Dell Zhang}
\email{dell.z@ieee.org}
\authornote{Dell Zhang is also the Corresponding author (dell.z@ieee.org).}
\affiliation{%
  \institution{TeleAI}
  \city{Shanghai}
  \country{China}}

%%
%% By default, the full list of authors will be used in the page
%% headers. Often, this list is too long, and will overlap
%% other information printed in the page headers. This command allows
%% the author to define a more concise list
%% of authors' names for this purpose.
\renewcommand{\shortauthors}{Li Chen, Rui Liu, Yuxiang Zhou, Xudong Ma, Yong Chen, Dell Zhang}

%%
%% The abstract is a short summary of the work to be presented in the
%% article.
\begin{abstract}
Deep hashing presents an effective strategy for large-scale image retrieval. 
Current hashing methods are generally categorized by their supervision types: point-wise, pair-wise, and list-wise. 
Recent advancements in point-wise methods (e.g., CSQ, MDS) have significantly enhanced retrieval performance across diverse datasets by pre-assigning a hash center to each class, thereby improving the discriminability of the resultant hash codes. 
However, these methods employ purely data-independent algorithms for generating hash centers, overlooking the semantic connections between different classes, which, we argue, could degrade retrieval performance. 
To tackle this problem, this paper expands on the newly emerged concept of ``hash centers'' to introduce ``\emph{semantic} hash centers'', which posits that hash centers of semantically related classes should exhibit closer Hamming distances, while those of unrelated classes should be more distant. 
Based on this hypothesis, we propose a three-stage framework, termed SHC, to produce hash codes that preserve semantics. 
First, we build a classification network to detect semantic similarities between classes, and utilize a data-dependent approach to similarity calculation that can adapt to varied data distributions. 
Next, we develop a new optimization algorithm to generate semantic hash centers. This algorithm not only maintains semantic relatedness among hash centers but also integrates a constraint to ensure a minimum distance between them, addressing the issue of excessively proximate hash centers potentially impairing retrieval performance. 
Finally, we train a deep hashing network with the above generated semantic hash centers to convert each image into a binary hash code. 
Experiments on large-scale image retrieval across several public datasets demonstrate that SHC generates more discriminative hash codes, markedly enhancing retrieval performance. 
Specifically, in terms of the MAP@100, MAP@1000, and MAP@ALL metrics, SHC records average improvements of +7.26\%, +7.62\%, and +11.71\%, respectively, over the most competitive existing methods.
\end{abstract}

%%
%% The code below is generated by the tool at http://dl.acm.org/ccs.cfm.
%% Please copy and paste the code instead of the example below.
%%
\begin{CCSXML}
<ccs2012>
<concept>
<concept_id>10002951.10003317.10003338.10003346</concept_id>
<concept_desc>Information systems~Top-k retrieval in databases</concept_desc>
<concept_significance>500</concept_significance>
</concept>
</ccs2012>
\end{CCSXML}

\ccsdesc[500]{Information systems~Top-k retrieval in databases}

%%
%% Keywords. The author(s) should pick words that accurately describe
%% the work being presented. Separate the keywords with commas.
\keywords{Learning to Hash, Hash Center, Quantization, Representation Learning, Image Retrieval.}

%\received{20 February 2007}
%\received[revised]{12 March 2009}
%\received[accepted]{5 June 2009}

%%
%% This command processes the author and affiliation and title
%% information and builds the first part of the formatted document.
\maketitle

\section{Introduction}
With the onset of the big data era, images have become increasingly vital in social media, e-commerce, surveillance, and various other domains~\cite{LargeScaleImageRetrieval1, LargeScaleImageRetrieval2}.
This shift poses substantial challenges in managing the growing volume of images, necessitating the development of efficient and effective methods for indexing and searching large-scale image data repositories.
Image hashing, as a quantization technique~\cite{LearningToHash,PQ-Hervegou-TPAMI-2011,ITQ-Yunchao-TPAMI-2013}, has emerged as a practical solution to these challenges. 
By converting images into fixed-length binary codes, it not only cuts down on storage needs significantly but also enables rapid comparisons and retrievals via hardware-level XOR operations, and thus benefits a wide array of applications.

\begin{figure*}[ht]
	\centering
	\includegraphics[width=0.99\textwidth]{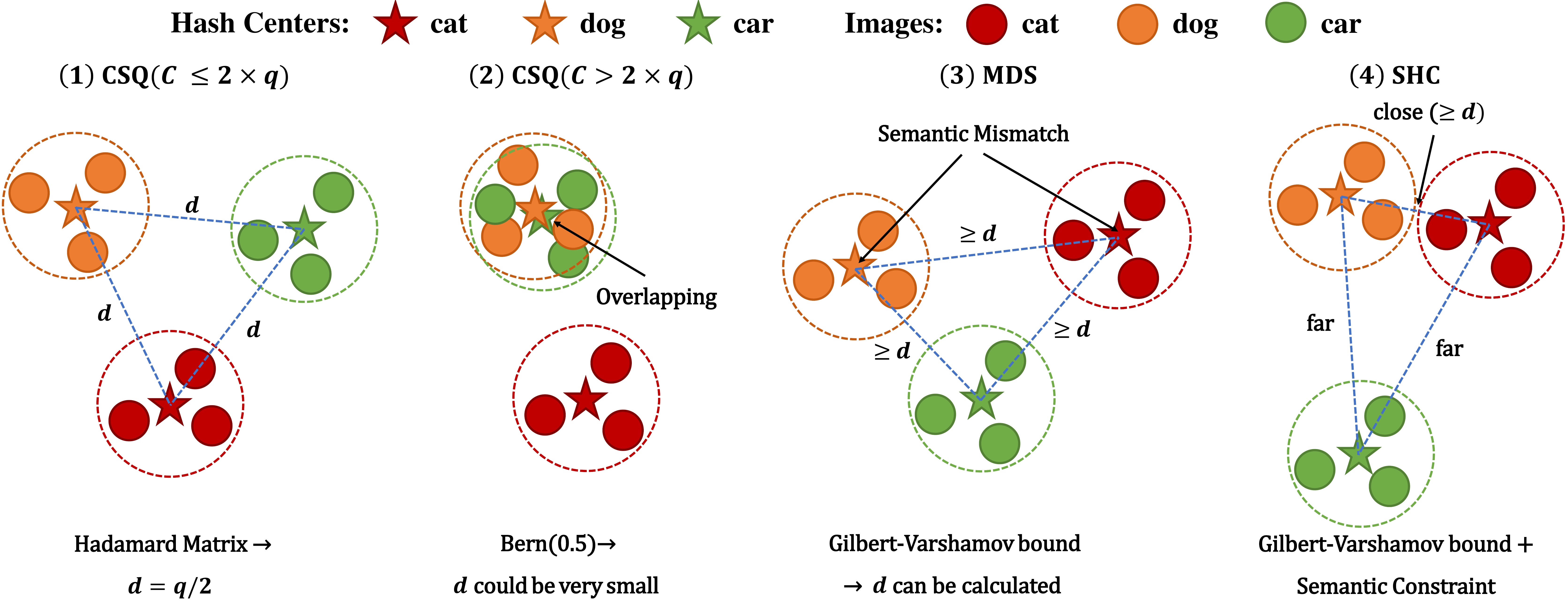}
	\caption{Illustration of hash center based image hashing approaches: CSQ (1\&2), MDS (3), and our SHC (4).}
	\label{fig:HashCenters}
\end{figure*}

Image hashing has undergone a series of advancements along with the evolution of technology~\cite{Survey-Jingdong-TPAMI-2018,Survey-Xiao-TKDD-2023}. 
Initially, this research field focused on data-independent, shallow feature-based hashing methods, such as Locality Sensitive Hashing (LSH)~\cite{LSH1, LSH2, LSH3}; but recently, it has transitioned to data-dependent, deep semantic feature-based deep hashing approaches, such as DPSH~\cite{DPSH,CNNs}, HashNet~\cite{HashNet}, ADSH~\cite{ADSH-QingYuan-AAAI-2018}, and CSQ~\cite{CSQ}. 
This transformation has vastly improved the capability of image hashing, from processing small datasets in its early stages to efficiently managing large-scale data sets, underscoring its practical utility.

Existing deep hashing methods can be broadly categorized into three types: pair-wise similarity-based methods~\cite{DPSH, HashNet, DCH, BNNH, IDHN, DHN}, list-wise similarity-based methods~\cite{DNNH, norouzi2012hamming, DTSH}, and point-wise methods~\cite{GreedyHash, LTH, CSQ, MDS, OrthoHash}. 
Both pair-wise and list-wise approaches primarily focus on the relationships among local images, which limits their effectiveness. 
To address this problem, CSQ~\cite{CSQ} introduces the novel concept of ``hash centers'', in the category of point-wise methods. 
This approach utilizes the Hadamard matrix to predefine hash centers in the Hamming space for each image class, aiming to align the hash codes of similar images with their respective hash centers. 
CSQ shows impressive performance when the number of categories (i.e., $C$) is less than twice the length of hash codes (i.e., $q$), as illustrated in Fig.~\ref{fig:HashCenters}(1).
However, challenges arise when the number of categories exceeds twice the length of hash codes. 
In these cases, the Hadamard matrix cannot produce a hash center for each individual category. 
Consequently, some hash centers are randomly generated according to the Bernoulli distribution. 
This can lead to hash centers being too close or even overlapping, which would result in complete misclassification of certain categories, as depicted in Fig.~\ref{fig:HashCenters}(2).

MDS~\cite{MDS} proposes an novel solution to the above mentioned issue. 
Unlike CSQ, MDS makes use of the Gilbert-Varshamov bound~\cite{GVbound} to determine the feasible minimum distance between hash centers, taking both the number of image categories and the length of hash codes into account to ensure optimal spacing between hash centers. 
MDS strives to eliminate the problem of overly close hash centers, particularly when dealing with a vast number of categories and a limited hash code length. 
As a result, MDS achieves superior performance compared to CSQ, as shown clearly in Fig.~\ref{fig:HashCenters}(3).

Nevertheless, both CSQ and MDS rely solely on mathematical optimality to generate hash centers, which results in a lack of semantic interpretability (see Fig.~\ref{fig:HashCenters}). 
These methods are agnostic to the actual data, and assume that a larger Hamming distance between hash centers translates straightforwardly into better retrieval accuracy. 
This presumption overlooks the crucial similarity relationships between different categories. 
Therefore, the hash centers generated by these methods are short of discriminative power and are arbitrarily assigned to categories, which undermines the effectiveness of image retrieval. 
We argue that this approach is flawed and substantially impairs the potential for optimal retrieval performance.

Therefore, in this paper, we introduce SHC, which posits that in a confined Hamming space, the quality of hash centers should not merely depend on maximizing the distances between them; 
it is imperative to consider the semantic relationships among categories as well. 
This new approach ensures that while maintaining proper distance between hash centers, they are also endowed with semantic discriminatory capabilities. 
Specifically, semantically similar categories ought to possess hash centers with shorter Hamming distances, whereas semantically dissimilar categories should exhibit longer distances. 
We designate these as ``\emph{semantic} hash centers'', as illustrated in Fig.~\ref{fig:HashCenters}(4).

Our main contributions are threefold.
\begin{itemize}
	\item We advance the novel concept of ``hash centers'' to the more explainable ``\emph{semantic} hash centers'', and introduce a data-dependent similarity calculation method that utilizes the inherent semantic content of images, moving beyond the conventional use of categorical labels. This approach enhances adaptability across varied data distributions.
	\item We develop a comprehensive optimization framework tailored for generating semantic hash centers. This framework aims to preserve semantic relationships while enforcing minimum distance constraints. By incorporating proxy variables and harnessing the ALM optimization scheme, we have successfully broken down the NP-hard problem into smaller, more manageable sub-optimization tasks, yielding impressive optimization results.
	\item Extensive experiments on multiple datasets underscore the superiority of our SHC approach over other cutting-edge deep hashing methods, defining the new state of the art. Specifically, in terms of the evaluation metrics MAP@100, MAP@1000, and MAP@ALL, SHC demonstrates significant performance improvements, with average increases of +7.26\%, +7.62\%, and +11.71\%, respectively. Ablation studies further validate the efficacy of our semantic and minimum distance constraints, as well as the benefits of utilizing image semantics over categorical labels.
\end{itemize}

\section{Related Work}

Supervised deep hashing approaches use deep learning techniques to learn, from labeled data, hashing functions that map original high-dimensional data points to a low-dimensional hash code space, enabling efficient and accurate data retrieval. 
They can be categorized into pair-wise, list-wise, and point-wise methods, according to the way of semantic measurement.

\textbf{Pair-wise methods} establish the similarities among data points by using pairs of images. 
If two images share at least one common label, they are deemed similar and assigned a value of 1 in the pairwise similarity matrix; otherwise, they are considered dissimilar and assigned a value of 0. 
Some prominent pairwise approaches include DPSH~\cite{DPSH}, HashNet~\cite{HashNet}, DCH~\cite{DCH}, BNNH~\cite{BNNH}, and IDHN~\cite{IDHN}. DPSH marks the early efforts in this field, while HashNet adjusts the loss function to deal with the challenge of semantic imbalance. 
Besides, DCH introduces the concept of ``Hamming radius'' to transform the linear retrieval problem into an index lookup problem, thereby enhancing efficiency. In addition, BNNH replaces CNN with BNN to minimize the parameter overhead, making it much more computationally efficient. 
Lastly, IDHN quantifies the similarity of data pairs based on normalized semantic labels, significantly improving image retrieval performance on multi-label datasets.

\textbf{List-wise methods} rank or order data pairs based on their labels, where a set of samples (or a list) is considered simultaneously during training, rather than just pairs or individual points. 
%The primary objective of these methods is to ensure that the distance between similar samples in the Hamming space is minimized, while the distance between dissimilar samples is maximized. 
DTSH~\cite{DTSH} is a typical list-wise method, which, unlike DPSH, uses triplet similarity to process images. In a triplet, the first image is similar to the second, but differs from the third. 
The training objective not only constrains the Hamming distances of similar image pairs to be close but also enforces a large distance between dissimilar image pairs, resulting in a superior performance compared to DPSH. 
DSRH~\cite{DSRH-Fang-CVPR-2015,DSRH-Ting-IJCAI-2016,DcmSRH-Xiaoqing-TMM-2023} is another representative ranking-based method that directly learns the ranking of multi-label similarities and proposes a novel loss function to optimize the multi-label ranking metric.

\textbf{Point-wise methods} rely on image categories as supervised signals. 
Prominent examples of such methods include GreedyHash~\cite{GreedyHash}, LTH~\cite{LTH}, CSQ~\cite{CSQ}, MDS~\cite{MDS}, and OrthoHash~\cite{OrthoHash}. 
GreedyHash uses a point-wise loss function that combines classification loss with penalty terms. During training, the sign function is seamlessly integrated, and a greedy backpropagation algorithm is employed to fine-tune network parameters. 
LTH adopts the familiar cross-entropy loss function, used in standard classification tasks. It reconnects image hash codes to labels and computes cross-entropy loss with the original labels, thus preserving semantic congruity between hash codes and image labels. 
CSQ introduces a global similarity metric and the concept of ``hash centers''. Each image category is assigned a unique hash center, maintaining a constant Hamming distance between them. CSQ encourages images of the same category to cluster around their respective centers, resulting in compact hash codes. 
MDS builds on CSQ, addressing the challenge of minimizing the distance between hash centers when the number of categories exceeds the capacity of Hadamard matrices. 
Orthohash introduces a deep hashing model with a unified learning objective, demonstrating that maximizing cosine similarity between continuous codes and their corresponding binary orthogonal codes ensures both discriminative hash codes and minimal quantization errors.

Our SHC method draws inspiration from the point-wise CSQ~\cite{CSQ} and MDS~\cite{MDS} approaches, particularly their innovative concept of ``Hash Center''. 
SHC expands upon this idea by introducing a novel concept called ``Semantic Hash Center'', which not only takes into account the distance constraints imposed on hash centers but also endows them with the ability to maintain semantic consistency.

\section{Method}

\begin{figure*}[ht]
    \centering
    \includegraphics[width=0.99\textwidth]{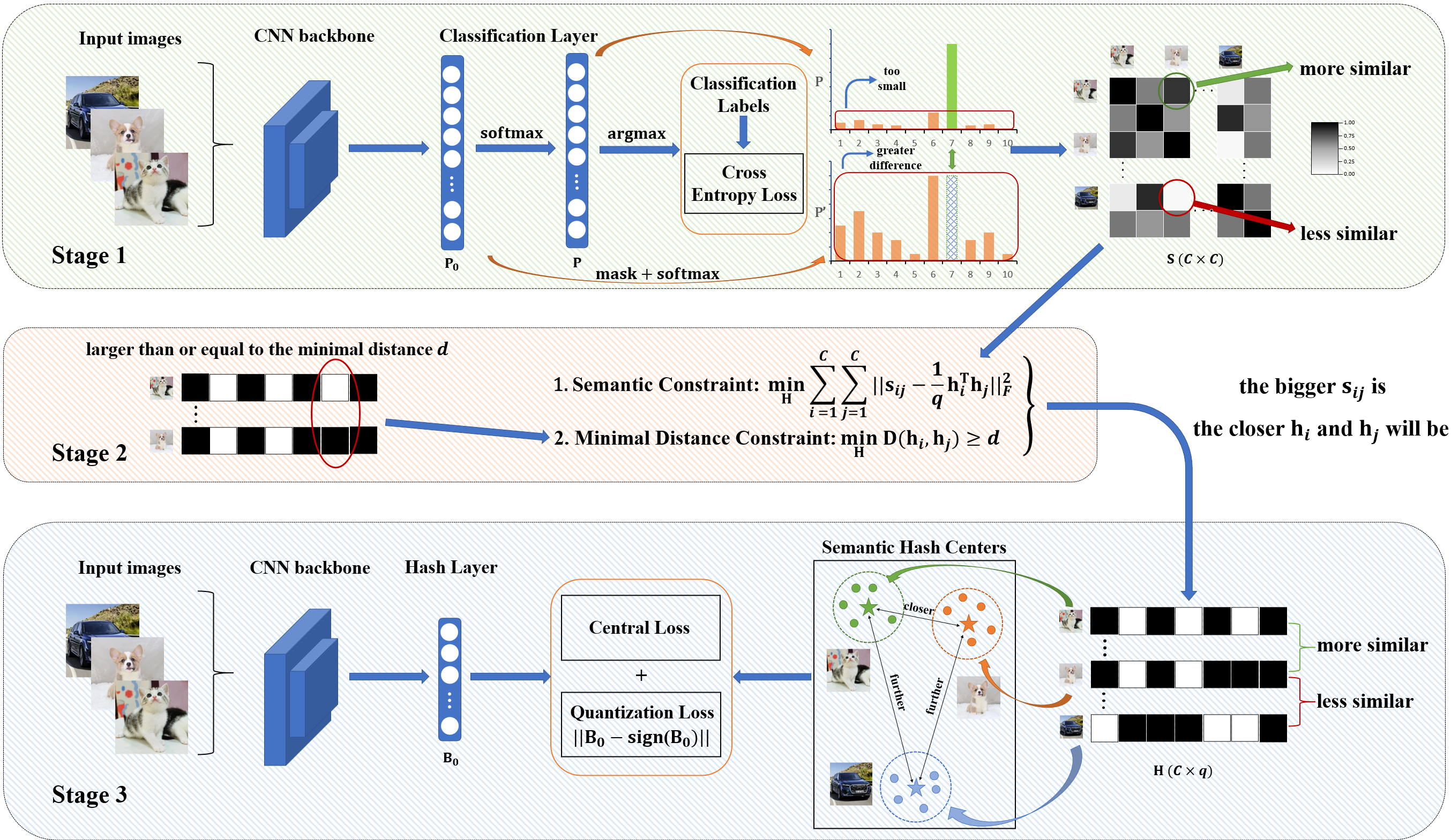}
    \caption{The three-stage architecture of our SHC method: similarity matrix construction, hash centers generation, and hashing network training.}
    \label{fig:architecture}
\end{figure*}

Image hashing aims to find a hash function $\text{F}: \mathbf{x} \to \{-1, +1\}^q$ that converts an image $\mathbf{x}$ into a $q$-bit vector, so that the more similar two images are, the smaller their Hamming distance will be. 
%For deep hashing approaches, the hash function $\text{F}$ usually consists of a deep convolutional neural network and some fully connected layers.

The recently appeared SOTA methods, e.g., CSQ~\cite{CSQ} and MDS~\cite{MDS}, are hash centers based optimization algorithms, which involve two stages, i.e., hash centers generation and hashing network training. 
They assume that the further their generated hash centers are, the better their performances will be; however, they neglect the semantic correlations between their generated hash centers.

Hence, we propose an enhanced deep hashing approach that employs a 3-stage pipeline, i.e., similarity matrix generation, hash centers generation, and hashing network training, as shown in Fig.~\ref{fig:architecture}, which not only tries to keep hash centers as far away from each other as possible, but also maintains semantic consistencies between them.

\subsection{Stage 1: Construct the Data-dependent Pairwise Similarity Matrix}

Commonly, there are 2 methods for constructing similarity matrix $\mathbf{S}=\{\mathbf{s}_{ij}\}^{N\times N}$, of which $\mathbf{s}_{ij}$ represents the similarity score between the $i$-th and $j$-th image, and $N$ denotes the total number of images. 
For the first method, if image $\mathbf{x}_i$ and image $\mathbf{x}_j$ share the same class label, then $\mathbf{s}_{ij}$ equals 1; or else $\mathbf{s}_{ij}$ equals 0.
For the second method, $\mathbf{s}_{ij}$ is usually defined as the cosine value of image $\mathbf{x}_i$ and image $\mathbf{x}_j$, i.e.,
\begin{equation}
\mathbf{s}_{ij}=\text{cosine}<\mathbf{x}_i,\mathbf{x}_j>=\frac{\mathbf{x}_i^T \mathbf{x}_j}{||\mathbf{x}_i||\cdot||\mathbf{x}_j||},
\end{equation}
where $\mathbf{x}_i$ ($i=1,2,\cdots,N$) can be computed as the semantic vector of its textual labels from pre-trained large language models such as BERT~\cite{BERT-Jacob-NAACL-2019} or GPT~\cite{GPT-Tom-NeurIPS-2020}.
These two methods have been testified to be useful in semantic preservations~\cite{HashNet,DPSH}; however, the first approach is coarse-grained with only $\{0,1\}$ values while the second one is not adaptive to various data distributions w.r.t. the same classes. 

Here, we devise a data-dependent similarity matrix construction method via conducting a pre-classification task on concrete image datasets.
More specifically, we utilize a pre-trained ResNet34~\cite{ResNet-Kaiming-CVPR-2016} network followed by a softmax layer for image classifications, i.e., 
\begin{equation}
\mathbf{p}_{0i} = \text{ResNet34}(\mathbf{x}_i),
\end{equation}
\begin{equation}
\mathbf{p}_i = \text{softmax}(\mathbf{p}_{0i}),
\end{equation}
where $\mathbf{p}_i$ is a predicted probability vector over the $C$ classes, and traditionally the image $\mathbf{x}_i$ is categorized to the $j$-th class with
\begin{equation}
j=\mathop{\arg\max}\limits_{k}\{\mathbf{p}_{ik}\}, k=1,2,\cdots,C.
\end{equation} 

However, other values of $\mathbf{p}_i$, i.e., $\mathbf{p}_{ik}$ (k$\ne$j), also covering rich semantics, are not used to the fullest. 
In fact, the larger $\mathbf{p}_{ik}$ is, the higher the similarity between the $i$-th and the $k$-th class can be treated. 
Usually, $\mathbf{p}_{ij}$ is the largest, and much larger than the other values $\mathbf{p}_{ik}$, as shown in Fig.~\ref{fig:architecture}; thus, 
we first mask the maximum value $\mathbf{p}_{0ij}$ of $\mathbf{p}_{0i}$, and then softmax it to $\mathbf{p}_i^{\prime}$, formulated as:
\begin{equation}
\hat{\mathbf{p}}_i \triangleq \text{mask}_j(\mathbf{p}_{0i}) \in \mathbb{R}^C,
\end{equation}
and
\begin{equation}
\mathbf{p}^{\prime}_i = \text{softmax}(\hat{\mathbf{p}}_i),
\end{equation}
where $\hat{\mathbf{p}}_{ik}=\mathbf{p}_{0ik}$ and $\hat{\mathbf{p}}_{ij}=-\text{INF}$.

For all the images in the $m$-th category, we average their similarity vectors, i.e.,
\begin{equation}
\mathbf{s}_m = \frac{1}{N_m}\sum_{l = 1}^{N_m} \mathbf{p}_l^{\prime},
\end{equation}
where $N_m$ represents the number of images in category $m$, and $\mathbf{s}_m$ represents the overall similarity between catetory $m$ and other categories.

Next, we normalize $\mathbf{s}_m$ to $(-1,+1)^C$-vector via:
\begin{equation}
\mathbf{s}_m = \frac{\mathbf{s}_m - \mathbf{s}_{m0}}{\max\{|\max(\mathbf{s}_m) - \mathbf{s}_{m0}|, |\min(\mathbf{s}_m)-\mathbf{s}_{m0}|\}},
\end{equation}
where
\begin{equation}
	\mathbf{s}_{m0}= \frac{1}{C}\sum_{l = 1}^{C} \mathbf{s}_{ml}.
\end{equation}

Since $\mathbf{s}_m$ denotes the overall similarity between the $m$-th class and the other classes, we can concatenate them as the whole pairwise similarity matrix $\mathbf{S}$:
\begin{equation}
\mathbf{S}=[\mathbf{s}_1,\mathbf{s}_2,\cdots,\mathbf{s}_C]^T \in \mathbb{R}^{C \times C},
\end{equation}
of which $\mathbf{s}_{ij}$ denotes the similarity between the $i$-th and the $j$-th class. 
Here, $\mathbf{S}$ is not symmetric and its diagonal elements does not equal 1 yet, then we further conduct the following operators for the final similarity matrix:
\begin{equation}
\mathbf{S} = \frac{\mathbf{S} + \mathbf{S^T}}{2},
\end{equation}
and
\begin{equation}
\mathbf{s}_{ii} = 1,
\end{equation}
which deserves most notice that it's data-dependent instead of labels-based, thus adaptive to various data distributions.

\subsection{Stage 2: Generate the Semantic Hash Centers}
\label{stage2}
In this part, we develop an optimization procedure to yield a set of hash centers, which are not only under the guidance of similarity matrix $\mathbf{S}$, but also mutually separated at least by a minimal distance $d$ that can be computed by the Gibert-Varshamov bound (see Theorem~\ref{theorem:Gilbert-Varshamov}). 

\subsubsection{The Minimal Distance Calculation}

The Gilbert-Varshamov bound lets us know that for $C$ hash centers, the Hamming distance between any two of them is larger than or at least equal to $d$.

\begin{theorem}[Gilbert-Varshamov
	Bound]\label{theorem:Gilbert-Varshamov}
	For $C$ $q$-bit binary vectors $\mathbf{h}_i \in \{-1,+1\}^q$ ($1 \le i \le C$), set the minimal Hamming distance of any two vectors as $d$, there is the Gilbert-Varshamov bound:
	\begin{equation}\label{GV-bound}
	\frac{2^q}{C} \le \sum_{i=0}^{d-1} \binom{q}{i}. 
	\end{equation}
\end{theorem}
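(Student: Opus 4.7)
The classical route to the Gilbert-Varshamov inequality is the greedy sphere-covering argument. The idea is to build a set $\mathcal{C} \subseteq \{-1,+1\}^q$ of binary vectors incrementally, at each step adjoining any vector whose Hamming distance to every previously chosen vector is at least $d$, and then to observe that once this procedure saturates, the Hamming balls of radius $d-1$ around the chosen centers must jointly cover the entire cube --- otherwise one could have extended $\mathcal{C}$ further. The $2^q$ lattice points then fit inside at most $|\mathcal{C}|$ such balls, and a trivial union bound gives the inequality.

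Concretely, the first step is to initialize $\mathcal{C}$ with any vector and iterate: at each stage, test whether some $\mathbf{v} \in \{-1,+1\}^q$ has Hamming distance at least $d$ to every element already in $\mathcal{C}$; if so, add it, otherwise halt. The second step is the key observation that at termination every $\mathbf{v} \in \{-1,+1\}^q$ lies within Hamming distance $d-1$ of some element of $\mathcal{C}$, since otherwise $\mathbf{v}$ itself would have been a valid candidate to add. The third step is the combinatorial count: a Hamming ball of radius $d-1$ in $\{-1,+1\}^q$ contains exactly $\sum_{i=0}^{d-1}\binom{q}{i}$ points, because a vector at Hamming distance precisely $i$ from a given $\mathbf{h}$ is specified by choosing which $i$ of the $q$ coordinates to flip. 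A union bound over the $C = |\mathcal{C}|$ centers then yields
\begin{equation*}
2^q \;\le\; \Bigl|\bigcup_{\mathbf{h} \in \mathcal{C}} B(\mathbf{h},\,d-1)\Bigr| \;\le\; C \cdot \sum_{i=0}^{d-1}\binom{q}{i},
\end{equation*}
and dividing through by $C$ produces exactly the claimed bound.

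The main subtlety, and the point I would phrase most carefully in the write-up, is the interpretation of the quantifier on $C$. What the greedy argument literally produces is a \emph{maximal} set $\mathcal{C}$, so the inequality attaches to its cardinality; for small non-maximal configurations of $C$ vectors at pairwise distance $d$ the raw inequality can fail (for instance with $q=10$, $d=5$, $C=1$ the left side $1024$ exceeds the right side $386$). Hence the statement must be read as a feasibility condition relating $C$ and $d$ for \emph{some} set of $C$ binary vectors with pairwise Hamming distance at least $d$ to exist in $\{-1,+1\}^q$ --- equivalently, as a lower bound on the largest achievable $d$ for fixed $C$ and $q$, which is exactly the mode in which the subsequent hash-center generation stage invokes it. Once this quantifier issue is settled the combinatorial and union-bound pieces are routine.
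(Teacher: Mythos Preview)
Your argument is the standard greedy sphere-covering proof of the Gilbert--Varshamov bound and is correct. The paper, for its part, does not actually supply a proof: its entire proof environment reads ``This theorem has been proven by Ref.~[X], which in fact provides a method to calculate the minimal distance $d$ given $C$ and $q$.'' So there is no approach to compare against beyond the classical one you have reproduced.

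Your closing remark about the quantifier is well taken and worth keeping. As literally stated in the paper, the inequality is asserted for \emph{any} $C$ vectors with pairwise minimum distance $d$, which is false (your $q=10$, $d=5$, $C=1$ counterexample is decisive). The correct reading --- and the one the paper silently relies on when it later solves for the largest feasible $d$ by exhaustive search --- is existential: for each $d$ satisfying the inequality there \emph{exists} a configuration of $C$ centers at pairwise distance at least $d$. Your proof establishes exactly that via the maximality of the greedily built $\mathcal{C}$, so the logical content is sound even though the paper's phrasing is loose.
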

\begin{proof}
	This theorem has been proven by Ref.~\cite{Rom-ErrorCorCodes-Docklady-1957}, which in fact provides a method to calculate the minimal distance $d$ given $C$ and $q$.
\end{proof}

To be specific, since $d$ is an integer in $\{1, 2, ..., q\}$, and according to Eq.~(\ref{GV-bound}), we can compute it easily by exhaustive search~\cite{MDS}.

%To get the minimal distance $d$, which means to catch the smallest $d$ that can satisfy Eq.~(\ref{GV-bound}). Since $d$ is integer in $\{0, 1, ..., q\}$, we can obtain it easily by exhaustive search.

\subsubsection{Optimization Objective}

For all images of $C$ classes, we consider to generate $C$ hash centers $\{\mathbf{h}_1, \mathbf{h}_2, \cdots , \mathbf{h}_C\}$ with $\mathbf{h}_i$ corresponding to the $i$-th class. 
Clearly, if the $i$-th and $j$-th class are more semantically similar, then their cosine score between $\mathbf{h}_i$ and $\mathbf{h}_j$ is much closer to $\mathbf{s}_{ij}$, which can be formulated as:
\begin{equation}\label{optimization:semantic}
\begin{split}
\mathop{\min}_{\mathbf{h}_i,i=1,\cdots,C} &\text{L}_{semantic}=\sum_{i=1}^{C}\sum_{j=1}^{C}||\mathbf{s}_{ij}-\frac{1}{q}\mathbf{h}_i^T \mathbf{h}_j||_2^2 \\
&\text{s.t.} \quad \mathbf{h}_i \in \{-1, +1\}^q;
\end{split}
\end{equation}
in addition, we want the distances between hash centers are as large as possible, and meanwhile the Hamming distance between any two hash centers is larger than or at least equal to the minimal distance $d$, i.e., 
\begin{equation}\label{optimization:distance}
\begin{split}
\mathop{\max}_{\mathbf{h}_i,i=1,\cdots,C}&\text{L}_{distance_o}=
\sum_{i=1}^{C} \sum_{j \neq i} ||\mathbf{h}_i-\mathbf{h}_j||_{Hamming} \\
&\text{s.t.} \quad
||\mathbf{h}_i-\mathbf{h}_j||_{Hamming} \ge d, (i \ne j).
\end{split}
\end{equation}

By the following Theorem~\ref{theorem:Hamming-Euclid}, the optimizaiton (\ref{optimization:distance}) is then converted to:
\begin{equation}\label{optimization:distance-simplified}
\begin{split}
\mathop{\min}_{\mathbf{h}_i,i=1,\cdots,C}&\text{L}_{distance}=
\sum_{i=1}^{C}\sum_{j \neq i} \mathbf{h}_i^T \mathbf{h}_j \\
&\text{s.t.} \quad
\mathbf{h}_i^T \mathbf{h}_j \le q-2d, (1\le i,j\le C, j\ne i).
\end{split}
\end{equation}

\begin{theorem}[Hamming-Euclid]\label{theorem:Hamming-Euclid}
	For any two binary vectors $\mathbf{h}_i,\mathbf{h}_j \in \{-1,+1\}^{q}$, there is:
	\begin{equation}\label{eq:Hamming_Euclid}
	\text{D}(\mathbf{h}_i, \mathbf{h}_j) \triangleq ||\mathbf{h}_i-\mathbf{h}_j||_{Hamming}=\frac{1}{2}(q-\mathbf{h}_i^T \mathbf{h}_j),
	\end{equation}
	where $||\cdot||_{Hamming}$ represents the number of non-zero elements for a given vector; for example, if $\mathbf{x}=[0,0,-2,-2,+2+2]^T$, then $||\mathbf{x}||_{Hamming}=4$.
\end{theorem}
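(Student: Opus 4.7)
The plan is to prove the identity coordinate-wise by exploiting the fact that entries of $\mathbf{h}_i$ and $\mathbf{h}_j$ lie in the two-point set $\{-1,+1\}$, so that a single product $h_{ik}h_{jk}$ already carries a complete agree/disagree indicator. Concretely, for each bit position $k \in \{1,\dots,q\}$ I would observe that $h_{ik}h_{jk} = +1$ when $h_{ik}=h_{jk}$ and $h_{ik}h_{jk} = -1$ when $h_{ik}\ne h_{jk}$, so the sign of the product distinguishes the two cases exhaustively.

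Next, I would introduce the counts $a = |\{k: h_{ik}=h_{jk}\}|$ and $b = |\{k: h_{ik}\ne h_{jk}\}|$, which by definition satisfy $a+b=q$ and $b = \|\mathbf{h}_i-\mathbf{h}_j\|_{Hamming}$ (the latter using the convention stated in the theorem, since $h_{ik}-h_{jk}$ is nonzero exactly at disagreement positions). Summing the per-coordinate identity across all $k$ then yields
\begin{equation*}
\mathbf{h}_i^T\mathbf{h}_j \;=\; \sum_{k=1}^{q} h_{ik}h_{jk} \;=\; a - b \;=\; (q-b) - b \;=\; q - 2b.
\end{equation*}

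Solving this linear relation for $b$ gives $b = \tfrac{1}{2}(q - \mathbf{h}_i^T\mathbf{h}_j)$, which is exactly Eq.~(\ref{eq:Hamming_Euclid}). There is essentially no hard step here: the only subtlety is being careful that the theorem's nonstandard ``Hamming norm'' definition (counting nonzero entries rather than normalized distance) matches the disagreement count $b$ in the $\{-1,+1\}$ encoding, and not the $\{0,1\}$ encoding where $h_{ik}-h_{jk}$ could take different magnitudes. Once that bookkeeping is fixed, the proof is a one-line algebraic identity.
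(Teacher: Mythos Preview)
Your proof is correct and essentially the same counting argument as the paper's: the paper phrases it via the difference vector $\mathbf{h}_i-\mathbf{h}_j$ and counts its entries in $\{0,-2,+2\}$ (so that $n_0$ plays the role of your $a$ and $n_{-2}+n_{+2}$ the role of your $b$), whereas you go directly through the coordinate products $h_{ik}h_{jk}\in\{+1,-1\}$, but both routes reduce to the identity $\mathbf{h}_i^T\mathbf{h}_j = a-b = q-2b$.
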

\begin{proof}
	Let's first assume that:
	\begin{equation}
	\begin{split}
	\mathbf{x}=&\mathbf{h}_i - \mathbf{h}_j \\
	=&[x_1,\cdots,x_s,\cdots,x_q]^T,
	\end{split}
	\end{equation}
	where $x_s$ ($s=1,\cdots,q$) equals to 0 or -2 or +2. Denote the number of 0 and -2 and +2 in $\mathbf{x}$ as $n_0$, $n_{-2}$, $n_{+2}$, respectively. Then, there exists:
	\begin{equation}
	q=n_0 + n_{-2} + n_{+2}.
	\end{equation}
	
	Next, let's do some derivations:
	\begin{equation}
	||\mathbf{h}_i-\mathbf{h}_j||_{Hamming}=n_{-2} + n_{+2},
	\end{equation}
	and
	\begin{equation}
	\begin{split}
	\frac{1}{2}(q-\mathbf{h}_i^T \mathbf{h}_j)=&\frac{1}{2}(q-(n_0-(n_{-2} + n_{+2}))) \\
	=&\frac{1}{2}((q-n_0)+(n_{-2} + n_{+2})) \\
	=&\frac{1}{2}((n_{-2} + n_{+2})+(n_{-2} + n_{+2}))\\
	=&n_{-2} + n_{+2},
	\end{split}
	\end{equation}
	which finishes the proof of Eq.~(\ref{eq:Hamming_Euclid}), building the connection between the Hamming space and the Euclid space.
\end{proof}

Taking the optimization (\ref{optimization:semantic}) and (\ref{optimization:distance-simplified}) into considerations, we could obtain the overall optimization:
\begin{equation} \label{optimization:objective}
\begin{split}
\mathop{\min}_{\mathbf{H}} \text{L} & =\text{L}_{semantic} + \mu \times \text{L}_{distance}\\
& = ||\mathbf{S} - \frac{1}{q}\mathbf{H}^T\mathbf{H}||^{2}_{F} + \mu \sum_{i=1}^{C} \sum_{j \neq i} \mathbf{h}_{i}^{T}\mathbf{h}_j \\
&\text{s.t.}
\left \{
\begin{array}{ll}
\mathbf{h}_{i}^{T}\mathbf{h}_j \leq q - 2d, (1 \leq i,j \leq C, j \neq i), & \\
\mathbf{h}_i \in \{-1, +1\}^q, &
\end{array}
\right.
\end{split}
\end{equation}
where $\mathbf{S}$=$(\mathbf{s}_{ij})^{C \times C}$ represents the data-dependent similarity matrix, $\mathbf{H}=[\mathbf{h}_1, \mathbf{h}_2, \cdots, \mathbf{h}_C]$ denotes $C$ hash centers, $||\cdot||_F$ is the matrix's $\mathcal{F}$-norm, and $\mu$ is a hyper-parameter to balance the importance between the semantics and distances.

\subsubsection{Alternating Optimization Procedure}

The optimization (\ref{optimization:objective}) is a NP-hard problem because of the binary constraints $\mathbf{H} \in \{-1, +1\}^{q \times C}$. 
A feasible optimization method is to relax the discrete constraints; however, it will result in sub-optimal performance. 
Then, we devise a procedure that can optimize its variables in a mixed discrete-and-continuous manner.

First, we introduce an auxiliary variable $\mathbf{M}$ that satisfies:
\begin{equation}
\mathbf{M} = [\mathbf{m}_1, \mathbf{m}_2, \cdots, \mathbf{m}_C] = \mathbf{H},
\end{equation}
where $\mathbf{M} \in \mathbb{R}^{q \times C}$. 
Then let $\mathbf{k}_{ij}$ equals to $q - \mathbf{h}_{i}^{T}\mathbf{h}_j - 2d$, i.e.,
\begin{equation}
	\mathbf{k}_{ij}=q - \mathbf{h}_{i}^{T}\mathbf{h}_j - 2d.
\end{equation}
Next, we replace one $\mathbf{H}$ with $\mathbf{M}$ in Eq.~(\ref{optimization:objective}), and transform the optimization (\ref{optimization:objective}) into:
\begin{equation} \label{optimization:objective_2}
\begin{split}
\mathop{\min}_{\mathbf{H, M}} \text{L} & = ||\mathbf{S} - \frac{1}{q}\mathbf{H}^T\mathbf{M}||^{2}_{F} + \mu \sum_{i=1}^{C} \sum_{j \neq i} \mathbf{h}_{i}^{T}\mathbf{h}_j \\
&\text{s.t.}
\left \{
\begin{array}{lr}
    \mathbf{h}_i \in \{-1, +1\}^q, & \\
    \mathbf{m}_i \in \mathbb{R}^q, & \\
    \mathbf{h}_i = \mathbf{m}_i, (i = 1, 2, ..., C), & \\
    k_{ij} = q - \mathbf{h}_{i}^{T}\mathbf{h}_j - 2d, & \\
    k_{ij} \geq 0,
\end{array}
\right.
\end{split}
\end{equation}
which has both equality and inequality constraints; hence, we leverage the Augmented Lagrangian method~\cite{ALM} (ALM) and then re-formulate the optimization (\ref{optimization:objective_2}) as:
\begin{equation} \label{optimization:objective_3}
\begin{split}
\mathop{\min}_{\mathbf{H, M, K, \Lambda, \alpha}} & \text{L}  = ||\mathbf{S} - \frac{1}{q}\mathbf{H}^T\mathbf{M}||_{F}^{2} + \mu \sum_{i=1}^{C} \sum_{j \neq i} \mathbf{h}_{i}^{T}\mathbf{h}_j \\
&+ \sum_{i=1}^{C} \mathbf{\lambda}_{i}^{T}(\mathbf{h}_i - \mathbf{m}_i) + \sum_{i=1}^{C} \frac{\mathbf{\rho}_i}{2}||\mathbf{h}_i - \mathbf{m}_i||_{2}^{2} \\
&+\sum_{j \neq i} \alpha_{ij}(q - 2d - \mathbf{h}_{i}^{T}\mathbf{h}_j - \mathbf{k}_{ij}) \\
&+ \sum_{j \neq i} \frac{\beta_{ij}}{2}||q - 2d - \mathbf{h}_{i}^{T}\mathbf{h}_j - \mathbf{k}_{ij}||^2 \\
&\text{s.t.}
\left \{
\begin{array}{lr}
    \mathbf{h}_i \in \{-1, +1\}^q, & \\
    \mathbf{k}_{ij} \geq 0,
\end{array}
\right.
\end{split}
\end{equation}
where $\mathbf{K}$=$(\mathbf{k}_{ij})^{C \times C}$, $\mathbf{\Lambda}=[\lambda_1,\lambda_2,\cdots,\lambda_C]$, and $\alpha$=$(\alpha_{ij})^{C \times C}$ are learnable parameters, while $\mu$, $\mathbf{\rho}=(\rho_1,\rho_2,\cdots,\rho_C)$, and $\mathbf{\beta}=(\beta_{ij})^{C \times C}$ are non-negative hyper-parameters.

To solve the optimization problem (\ref{optimization:objective_3}), we adopt the \emph{alternating optimization} strategy: we iteratively optimize the variables \{$\mathbf{H}$, $\mathbf{M}$, $\mathbf{K}$, $\mathbf{\Lambda}$,  $\alpha$\} one by one, each time with the other variables fixed, until the objective function $\text{L}$ converges.

\textbf{\underline{Update $\mathbf{m}_i$}}. To update $\mathbf{m}_i$ with the other variables fixed, the sub-problem of $\text{L}$ in Eq.~(\ref{optimization:objective_3}) is an un-constrained optimization. 
The gradient of $\text{L}$ w.r.t. $\mathbf{m}_i$ can be easily calculated as:
\begin{equation} 
\begin{split}
\frac{\partial{L(\mathbf{m}_i)}}{\partial{\mathbf{m}_i}} 
& = \left(\frac{2}{q^2}\mathbf{HH}^T\mathbf{m}_i - \frac{2}{q}\mathbf{H}\mathbf{s}_i\right) \\
& + (- \mathbf{\lambda}_i) + (\mathbf{\rho}_i \mathbf{m}_i - \rho_i \mathbf{h}_i).
\end{split}
\end{equation}

By setting this gradient to zero, we can update $\mathbf{m}_i$ by:
\begin{equation} \label{update m}
\begin{split}
\mathbf{m}_i = 
\left(\frac{2}{q^2} \mathbf{HH}^T + \rho_i \mathbf{I}\right)^{-1}\left(\frac{2}{q}\mathbf{H}\mathbf{s}_i + \mathbf{\lambda}_i + \rho_i \mathbf{h}_i\right) ,
\end{split}
\end{equation}
which can be written in matrix forms, i.e.,
\begin{equation} \label{update M}
\begin{split}
\mathbf{M} = 
\left(\frac{2}{q^2} \mathbf{H}\mathbf{H}^T + \mathbf{\rho I}\right)^{-1}\left(\frac{2}{q}\mathbf{HS + \Lambda + \rho h_i}\right) ,
\end{split}
\end{equation}
where $\mathbf{I}$ denotes the identity matrix and $\rho_i = \rho$.

\textbf{\underline{Update $\mathbf{k}_{ij}$}}. To update $\mathbf{k}_{ij}$ with the other variables fixed, the sub-problem of $\text{L}$ in Eq.~(\ref{optimization:objective_3}) can be re-written as:
\begin{equation} 
\begin{split}
\mathop{\min}_{\mathbf{K}} \text{L} & = \alpha_{ij}(q - 2d - \mathbf{h}_{i}^{T}\mathbf{h}_j - \mathbf{k}_{ij}) \\
&+ \frac{\beta_{ij}}{2}||q - 2d - \mathbf{h}_{i}^{T}\mathbf{h}_j - \mathbf{k}_{ij}||_{2}^{2} \\
&\text{s.t.}
\begin{array}{lr}
    \mathbf{k}_{ij} \geq 0, (i \neq j) .
\end{array}
\end{split}
\end{equation}

The gradient of $\text{L}$ w.r.t. $\mathbf{k}_{ij}$ can be computed as:
\begin{equation} 
\frac{\partial{\text{L}(\mathbf{k}_{ij})}}{\partial{\mathbf{k}_{ij}}} = \beta_{ij}[(\mathbf{k}_{ij} - (q - 2d - \mathbf{h_{i}}^{T}\mathbf{h}_j)] - \alpha_{ij} .
\end{equation}

By setting this gradient to zero, and taking its inequality constraint into consideration, we can update $\mathbf{k}_{ij}$ by:
\begin{equation} \label{update k}
\mathbf{k}_{ij} = \max{\{q - 2d - \mathbf{h}_{i}^{T}\mathbf{h}_j + \frac{\alpha_{ij}}{\beta_{ij}}, 0\}} ,
\end{equation}
which can also be written in matrix forms, i.e,
\begin{equation} \label{update K}
\mathbf{K} = \max{\{q - 2d - \mathbf{H}^T\mathbf{H} + \mathbf{\frac{\alpha}{\beta}}, 0\}} .
\end{equation}

\textbf{\underline{Update $\mathbf{h}_i$}}. To update $\mathbf{h}_{i}$ with the other variables fixed, the sub-problem of $\text{L}$ in Eq.~(\ref{optimization:objective_3}) can be re-formulated as:
\begin{equation} 
\begin{split}
\mathop{\min}_{\mathbf{h}_i} \text{L} & = \sum_{i=1}^{C}||\mathbf{s}_i - \frac{1}{q}\mathbf{M}^T\mathbf{h}_i||_{2}^{2} + \mu \sum_{i=1}^{C} \sum_{j \neq i} \mathbf{h}_{i}^{T}\mathbf{h}_j \\
&+ \sum_{i=1}^{C} \mathbf{\lambda}_{i}^{T}(\mathbf{h}_i - \mathbf{m}_i) + \sum_{i=1}^{C} \frac{\mathbf{\rho}_i}{2}||\mathbf{h}_i - \mathbf{m}_i||_{2}^{2} \\
&+ \sum_{i=1}^{C} \sum_{j \neq i} \alpha_{ij}(q - 2d - \mathbf{h}_{i}^{T}\mathbf{h}_j - \mathbf{k}_{ij}) \\
&+ \sum_{i=1}^{C} \sum_{j \neq i} \frac{\beta_{ij}}{2}||q - 2d - \mathbf{h}_{i}^{T}\mathbf{h}_j - \mathbf{k}_{ij}||_{2}^{2} \\
& \text{s.t.}
\begin{array}{lr}
    \mathbf{h}_i \in \{-1, +1\}^q .
\end{array}
\end{split}
\end{equation}
The gradient of $\text{L}$ w.r.t. $\mathbf{h}_i$ can be achieved by:
\begin{equation} 
\begin{split}
\frac{\partial{\text{L}(\mathbf{h}_i)}}{\partial{\mathbf{h}_i}} 
& = \left(\frac{2}{q^2}\mathbf{MM}^T\mathbf{h}_i - \frac{2}{q}\mathbf{M}\mathbf{s}_i\right)  + (2 \mu \sum_{j = 1, j \neq i}^{C} \mathbf{h_j}) \\
& + (\mathbf{\lambda}_{i}^{T}) + \mathbf{\rho}_i(\mathbf{h}_i - \mathbf{m}_i)  + (-2 \mu \sum_{j = 1, j \neq i}^{C} \alpha_{ij}\mathbf{h}_j) \\
& + \sum_{j = 1, j \neq i}^{C} \beta_{ij}[2\mathbf{h}_j\mathbf{h}_{j}^{T}\mathbf{h}_i - 2(q -2d - \mathbf{k}_{ij}\mathbf{h}_j)] \\
& \text{s.t.}
\begin{array}{lr}
    \mathbf{h}_i \in \{-1, +1\}^q .
\end{array}
\end{split}
\end{equation}
Here, we utilize projected gradient descent (PGD)~\cite{PGD-JianFeng-SIAM-2018,DecentralizedPGD-Woocheol-CoRR-2023} to update $\mathbf{h}_i$ via:
\begin{equation} \label{update h}
\begin{split}
\mathbf{h}_i = \text{sign}\left[\mathbf{h}_i - \frac{1}{\eta} \frac{\partial L(\mathbf{h}_i)}{\partial \mathbf{h}_i}\right] .
\end{split}
\end{equation}
where $\eta$ is a hyper-parameter.

\textbf{\underline{Update $\mathbf{\lambda}_i$}}. 
To update $\mathbf{\lambda}_{i}$ with the other variables fixed, we can easily update $\mathbf{\lambda}_i$ via:
\begin{equation} \label{update lambda}
\mathbf{\lambda}_i = \mathbf{\lambda}_i + \mathbf{\rho}_i(\mathbf{h}_i - \mathbf{m}_i).
\end{equation}

\textbf{\underline{Update $\mathbf{\alpha}_{ij}$}}. To update $\alpha_{ij}$ with the other variables fixed, we can easily update $\alpha_{ij}$ with:
\begin{equation} \label{update alpha}
\alpha_{ij} = \alpha_{ij} +\beta_{ij} (q - 2d - \mathbf{h}_{i}^{T}\mathbf{h}_j - \mathbf{k}_{ij}).
\end{equation}

The above designed optimization procedure is summarized in Algorithm~\ref{alg:algorithm_1}.

\begin{algorithm}[tb]
    \caption{Semantic Hash Centers Generation}
    \label{alg:algorithm_1}
    \KwIn{Similarity matrix: $\mathbf{S}$; Code length: $q$; Number of classes: $C$; Minimal distance: $d$; Number of Training cycles: $T$; Hyper-parameters: $\mu, \rho, \eta, \beta$.}
    \KwOut{Semantic Hash Centers: $\mathbf{H}=[\mathbf{h}_1,\cdots,\mathbf{h}_C]$.}
    \text{\color{blue}{/*H has a minimum distance $d$*/}}\;
    Initialize hash centers $\{\mathbf{h}_1,\cdots,\mathbf{h}_C\}$ by MDS~\cite{MDS}\;
    \While{\textnormal{$t \leq T$}}{
        \text{\color{blue}{/*M is a proxy of H*/}}\;
        
        update $\mathbf{M}$ via Eq.~(\ref{update M})\;
        
        \text{\color{blue}{/*K is used to constrain the minimal distance*/}}\;
        update $\mathbf{K}$ via Eq.~(\ref{update K})\;
        
        \text{\color{blue}{/*column by column*/}}\;
        \While{\textnormal{$i \leq C$}}
        {
            \text{\color{blue}{/*-----$\mathbf{h}_i$-----*/}}\;
            \While{\textnormal{$l \leq 3$}}{
                update $\mathbf{h}_i$ via Eq.~(\ref{update h})\;
                $l \leftarrow{} l + 1$\;
            }
            \text{\color{blue}{/*-----$\mathbf{\lambda}_i$-----*/}}\;
            update $\lambda_i$ via Eq.~(\ref{update lambda})\;
            \text{\color{blue}{/*-----$\mathbf{\alpha}_{ij}$-----*/}}\;
            update $\alpha_{ij}$ via Eq.~(\ref{update alpha})\;
            $i \leftarrow{} i + 1$\;
        }
        $t \leftarrow t + 1$\;
    }
    Return semantic hash centers $\mathbf{H}=[\mathbf{h}_1,\cdots,\mathbf{h}_C]$.
\end{algorithm}

\subsection{Stage 3: Train the Deep Hashing Network}

With the generated $C$ semantic hash centers $\{\mathbf{h}_c\}_{c=1}^{C}$ as the supervised binary codes for images, we train a deep hashing network that can convert image $\mathbf{x}_i$ to its hash code $\mathbf{b}_i$ ($i=1,2,\cdots,N$). 
As shown in Fig.~\ref{fig:architecture} (Stage 3), the deep hashing network has three components.

The first component is the CNN backbone, which is used to capture the deep representations of the images. 
Here, we use ResNet34~\cite{ResNet-Kaiming-CVPR-2016} as the backbone, which is composed of a series of convolution layers and residual connections.

The second component is the hash layer, which is a fully-connected layer with the $\text{tanh}(\cdot)$ activation function that can map the image representation of $\mathbf{x}_i$ extracted by ResNet34 into a $q$-bit-like hash code $\mathbf{b}_{0i}$, and its final hash code $\mathbf{b}_i$ can be achieved via:
\begin{equation}
\mathbf{b}_i = \text{sign}(\mathbf{b}_{0i}) \in \{-1, +1\}^q.
\end{equation} 
Here, in the hash layer, we replace the $\text{sign}(\cdot)$ function with the $\text{tanh}(\cdot)$ function because the former is non-differentiable, making it unsuitable for training neural networks via backpropagation~\cite{BP-David-Nature-1986}; while the latter is differentiable and meanwhile can well approximate the former. 
Nevertheless, such an approximation can lead to quantization errors; hence, in the subsequent loss function, we will incorporate a quantization loss to compensate for these errors.

The third component is the loss function, and we utilize CSQ's~\cite{CSQ} loss function, which consists of the central similarity loss and the quantization loss. 
Since hash centers are binary vectors, we employ Binary Cross Entropy (BCE)~\cite{BCE} to replace the Hamming distance between hash code and its corresponding hash center, i.e.,
\begin{equation}
\text{D}_{Hamming}(\mathbf{h}_i, \mathbf{b}_{0i}) \propto \text{BCE}\left(\frac{\mathbf{h}_i+1}{2}, \frac{\mathbf{b}_{0i}+1}{2}\right) ,
\end{equation}
which indicates that the much closer the Hamming distance between a hash code and its corresponding hash center, the much smaller the value of the BCE function. Therefore, we can obtain the optimization objective of the central similarity loss $\text{L}_C$:
\begin{equation} 
\text{L}_C = 
	\frac{1}{N} \sum_{\substack{i = 1} }^{N} 
		\left(\frac{1+\mathbf{h}_i}{2}\text{log}\frac{1+\mathbf{b}_{0i}}{2} + \frac{1 - \mathbf{h}_{i}}{2}\text{log}\frac{1 - \mathbf{b}_{0i}}{2}\right) .
\end{equation}

Besides, the output binary-like vectors should be as close as possible to binary codes, thus we introduce a quantization loss $\text{L}_Q$ to refine the generated approximate hash codes $\mathbf{b}_{0i}$. 
Similar to DHN \cite{DHN}, we define:
\begin{equation} 
\text{L}_Q = \sum_{i = 1}^{N} |||\mathbf{b}_{0i}| - \mathbf{1}||_{2}^{2} ,
\end{equation}
where $\mathbf{1} \in \mathbb{R}^q$ is an all-one vector. 

Finally, we can summarize the overall central similarity optimization problem as:
\begin{equation}\label{overall-objective} 
\min_{\mathbf{\Theta}} \text{L} = \text{L}_C + \gamma \text{L}_Q ,
\end{equation}
where $\mathbf{\Theta}$ denotes all the trainable parameters of deep hashing network, and $\gamma$ is the hyper-parameter.

\section{Experiment}
This section will display the effectiveness of our proposed SHC approach via a series of experiments and ablation studies.

\subsection{Datasets}
We conduct experiments on three widely-used datasets, i.e., CIFAR-100\footnote{\url{https://www.cs.toronto.edu/~kriz/cifar.html}}~\cite{CIFAR-100}, 
Stanford Cars\footnote{\url{https://datasets.activeloop.ai/docs/ml/datasets/stanford-cars-dataset/}}~\cite{Stanford-Cars}, and NABirds\footnote{\url{https://dl.allaboutbirds.org/nabirds}}~\cite{NABirds}, based on which we curate 5 versions with different settings, described as below.

\textbf{\underline{CIFAR-100}.} It contains 100 classes, and each class has 600 images. 
Based on this dataset, we randomly sample 10,000 images as the training set, 5,000 images as the query set, and the remaining 45,000 images as the retrieval database.

\textbf{\underline{Stanford Cars}.} It holds 196 classes with 16,185 images in total. 
In its official website, it is divided almost evenly into two parts, i.e., 8,144 training images and 8,041 query images, and the 8,144 training images are also used as the retrieval databse. This version is marked as \textbf{Stanford Cars-A}.

By contrast, we randomly select 5,880, 1,958, and 8347 images as the training set, the query set, and the retrieval database, respectively. 
This version is marked as \textbf{Stanford Cars-B}.

\textbf{\underline{NABirds}.} It holds 555 classes, and has been partitioned into 24,633 training images and 23,929 query images; similar to the \textbf{Stanford Cars-A}, the 24,633 training images are also treated as the retrieval database. 
This version is marked as \textbf{NABirds-A}.

While we randomly choose 16,384 images for training, 8,171 images for queries, and 25,612 images as the retrieval databse. 
This version is marked as \textbf{NABirds-B}.

It's noteworthy that the version-B's training samples are much smaller than the version-A's, which implies that the learning on version-B's training set is much more challenging than that on version-A's; 
besides, the training set, the query set, and the retrieval database of version-B do not share any images, which is much more general and practical in real-world scenarios. 
The specific statistics of these five curated datasets are collected in Table~\ref{tab:Dataset-Statistics}.  

\begin{table}[tb]
	\centering
	\caption{Dataset Statistics.}
	\label{tab:Dataset-Statistics}
	\begin{tabular}{lrrr}
		\toprule
		\multicolumn{1}{c}{\textbf{Datasets}} & \multicolumn{1}{c}{\textbf{\#Training}} & \multicolumn{1}{c}{\textbf{\#Query}} & \multicolumn{1}{c}{\textbf{\#RetrievalDB}} \\ \midrule
		CIFAR-100                                & 10,000                                  & 5,000                                & 45,000                                            \\ %\hline
		Stanford Cars-A                         & 8,184                                   & 8,041                                & 8,184                                             \\ %\hline
		Stanford Cars-B                         & 5,880                                   & 1,958                                & 8,347                                             \\ %\hline
		NABirds-A                               & 24,633                                  & 23,929                               & 24,633                                            \\ %\hline
		NABirds-B                               & 16,384                                  & 8,171                                & 25,612                                            \\ \bottomrule
	\end{tabular}
\end{table}

\subsection{Evaluation Measures}
\label{Metrics}
To evaluate different methods' retrieval performances, we use four widely-used measures, i.e., Mean Average Precision (MAP@topK), Precision@topK curves, Recall@topK curves, and Precision-Recall curves~\cite{HashNet,EDMH-Yong-TKDE-2022,CSQ,OrthoHash,SCDH-Yong-TIP-2020}. 
For MAPs, we set topK to 100, 1000, and ALL (i.e., the total number of images in the retrieval database).
In terms of Precision/Recall curves, its topK is pre-defined as a large range from 1, 5, 10, $\cdots$, to 500.

Precision@topK refers to the proportion of images relevant to the query in the topK retrieved results. 
MAP@topK is a metric based on Precision@topK over all query images, also employed to evaluate a method's retrieval precisions.

Recall@topK refers to the ratio of successfully retrieved relevant images in the topK results among all the relevant images in the database. 

These two types of measures are often adopted to evaluate the performance of a retrieval system. 
However, there is a trade-off effect between the two kinds of metrics. 
Generally, as topK increases, precision tends to decrease while recall tends to rise.
Thus, the precision-recall curves present an overall illustration, taking both precision and recall into considerations, to assess a method's retrieval performance.

% 因此，若要评估一个检索系统的好坏，我们需要综合查准率和查全率两个指标，而mAP正是这样的一个指标，mAP衡量的是模型在所有类别上的好坏，它是所有类别AP的平均值，反映了模型在所有类别上的整体性能。

%Therefore, to evaluate the effectiveness of a retrieval system, it is necessary to consider both precision and recall. Mean Average Precision (MAP) is a metric that integrates precision and recall across all categories. It represents the average of the Average Precision (AP) values for all categories and provides an overall assessment of the model's performance across all categories.

\subsection{Baseline Approaches}
We compare our SHC approach with several state-of-the-art image hashing methods, including:
\begin{itemize}
    \item \textbf{DPSH}\footnote{\url{https://github.com/TreezzZ/DPSH_PyTorch}}~\cite{DPSH} is the first deep network that can
    perform simultaneous feature and hash-code learning for retrievals with pairwise labels.

    \item \textbf{DTSH}\footnote{\url{https://github.com/Minione/DTSH}}~\cite{DTSH} puts forward a triplet based deep hashing method that aims to maximize the likelihood of the given triplet labels.

    \item \textbf{HashNet}\footnote{\url{https://github.com/thuml/HashNet}}~\cite{HashNet} learns binary hash codes by minimizing a novel weighted pairwise cross-entropy loss in deep CNN architectures. 

    \item \textbf{GreedyHash}\footnote{\url{https://github.com/ssppp/GreedyHash}}~\cite{GreedyHash}  adopts a greedy
    principle to tackle the discrete constrained models by iteratively updating the network toward the probable optimal discrete solution in each iteration.

    \item \textbf{IDHN}\footnote{\url{https://github.com/pectinid16/IDHN}}~\cite{IDHN} develops a quantified similarity formula to assess the fine-grained pairwise similarity for supervising hash-code generations.

    \item \textbf{LTH}\footnote{\url{https://github.com/butterfly-chinese/long-tail-hashing}}~\cite{LTH} addresses the problem of learning to hash for more realistic datasets where the labels of given datasets roughly exhibit a long-tail distribution.

    \item \textbf{CSQ}\footnote{\url{https://github.com/yuanli2333/Hadamard-Matrix-for-hashing}}~\cite{CSQ} proposes a new concept ``Hash Center'', and then learns hash codes by optimizing the Hamming distance between hash codes and corresponding hash centers.

    \item \textbf{MDS}\footnote{\url{https://github.com/Wangld5/Center-Hashing}}~\cite{MDS} is an improved variant of CSQ, addressing the issue of how to enforce a minimum distance between hash centers when it is not feasible to utilize Hadamard matrices to generate hash centers due to a large number of categories.

    \item \textbf{OrthoHash}\footnote{\url{https://github.com/kamwoh/orthohash}}~\cite{OrthoHash} unifies training objectives of deep hashing by maximizing the cosine similarity between the continuous codes and
    binary orthogonal target under a cross entropy loss.
\end{itemize}

Among the 9 deep hashing baselines, there are 3 methods using hash centers (i.e., OrthoHash\cite{OrthoHash}, CSQ\cite{CSQ}, and MDS\cite{MDS}), and 6 other deep hashing methods (i.e., LTH\cite{LTH}, IDHN\cite{IDHN}, GreedyHash\cite{GreedyHash}, HashNet\cite{HashNet}, DTSH\cite{DTSH} and DPSH\cite{DPSH}) using pointwise classification, pairwise or listwise semantic supervisions.

\subsection{Experimental Settings}
In stage 1, we utilize a pre-trained ResNet34\footnote{\url{https://pytorch.org/vision/0.12/generated/torchvision.models.resnet34.html}}~\cite{ResNet34} to extract image features, followed by a fully-connected layer to project the image features to the class space. 
The softmax function is then applied to get the probability vectors for classification. 
During training, we adopt the RMS optimizer with a cosine annealing schedule with an initial learning rate of 7e-5 for CIFAR-100 and 1e-4 for other datasets, and set the batch size and epochs to 64 and 300, respectively.

In stage 2, we first adopt MDS~\cite{MDS} for initializing hash centers and then execute Algorithm~\ref{alg:algorithm_1} for generating the semantic hash centers. 
With respect to parameters $\mathbf{\alpha}$ and $\mathbf{\lambda}$, we set their initial values to $\mathbf{0}^{C \times C}$ and $\mathbf{0.1}^{q \times C}$, respectively. 
For other parameters, we set $\rho = 0.2$, $\mu = 0.625$, $\mathbf{\beta}=\mathbf{(1e-6)}^{C \times C}$, and $\eta = 0.5$. 
Besides, we set the training cycles $T$ to 20, and perform 3 inner iterations to ensure convergence when optimizing the parameter $\mathbf{H}$.

In stage 3, we leverage a pre-trained ResNet34 to extract image features, followed by a FC layer with the $\text{tanh}(\cdot)$ activation function to approximate the binary output. 
During training, we use the RMS optimizer with a cosine annealing learning rate schedule. For the CIFAR-100 dataset, the initial learning rate is set to 7e-5, while for other datasets, it is set to 1e-4. 
Besides, we set the batch size, epochs, and $\gamma$ (Eq.~(\ref{overall-objective})) to 64, 300, and 1e-4, respectively.

To ensure a fair comparison across different approaches in the experiments, we employ the pre-trained ResNet-34 as the backbone for all competitors. 
For the selected baseline approaches, their parameters are all set in accordance with their corresponding papers.

All the curated datasets and codes will be released on Github.

\subsection{Results and Analysis}
This part will exhibit experimental results and some analysis stage by stage.

\subsubsection{Accuracies of Pre-trained Classification Network}
To yield semantic hash centers, we first need to pre-train a classification network, which is a ResNet34 based classifier trained via the cross-entropy loss; after training, its accuracies on the testing sets of various datasets are collected in Table~\ref{tab:pretrain-classification}.

\begin{table}[tb]
    \centering
    \caption{Accuracies of the pre-trained classification network on various datasets.}
    \label{tab:pretrain-classification}
    \setlength{\tabcolsep}{7mm}
    {
        \begin{tabular}{lc}
        \toprule
        \multicolumn{1}{c}{\textbf{Datasets}} & \textbf{Accuracy} \\
        \midrule
        CIFAR-100 & 66.72\% \\
        Stanford Cars-A & 73.49\% \\
        Stanford Cars-B & 62.56\% \\
        NABirds-A & 52.14\% \\
        NABirds-B & 48.44\% \\
        \bottomrule
        \end{tabular}
    }
\end{table}

From Table~\ref{tab:pretrain-classification}, we can see that the classification accuracies on all datasets are not very high, but it does not affect our method to achieve good retrieval performance (see below). 
Although many images are classified into the wrong classes, we only focus on the mean (center) of all images within each class, and hence these classfication errors can be diluted out under the most correctly-classified images.

\subsubsection{Effectiveness of Generated Semantic Hash Centers}

By conducting the optimization Algorithm~\ref{alg:algorithm_1} in Section~\ref{stage2}, we can obtain semantic hash centers; then we compare our method with three other approaches, i.e., random-generated hash centers~\cite{CSQ}, hash centers generated using Hadamard matrices~\cite{CSQ}, and hash centers generated using the Gilbert-Varshamov bound~\cite{MDS}. 
We compare these methods in terms of the minimal Hamming distance ($d_{min}$) between any two hash centers, and the pairwise similarity losses ($S_{loss}$) of the generated hash centers, which are formulated as follows: 
\begin{align}
    d_{min} & = \min\{\frac{q - \mathbf{H}^T\mathbf{H}\odot(\mathbf{1} - \mathbf{I})}{2}\}, \\
    S_{loss} & =\text{L}_{semantic} =||\mathbf{S} - \frac{1}{q}\mathbf{H}^T\mathbf{H}||^{2}_{F};
\end{align}
and obviously a smaller $S_{loss}$ indicates a better preservation of semantics between classes.

Table~\ref{tab:min_d-S_loss} displays the minimal distances and the pairwise similarity losses of the generated hash centers w.r.t. different methods. 
Evidently, our method achieves the largest $d_{min}$ and the smallest $S_{loss}$ across all datasets, which validates the effectiveness of our hash centers generation algorithm, i.e., not only well keeping semantics preserved but also setting hash centers apart as far as possible.

\begin{table*}[!ht]
    \centering
    \caption{Comparisons of different hash center generation methods w.r.t. the minimal distance $d_{min}$ and the similarity loss $S_{loss}$.}
    \label{tab:min_d-S_loss}
    \setlength{\tabcolsep}{4.4mm}
    {
        \begin{tabular}{c | c | cc | cc | cc}
        \toprule
            \multirow{2}{*}{\textbf{Datasets}} & \multirow{2}{*}{\textbf{Methods}} & \multicolumn{2}{c|}{\textbf{16 bits}} & \multicolumn{2}{c|}{\textbf{32 bits}} & \multicolumn{2}{c}{\textbf{64 bits}} \\ 
            \cmidrule(lr{.75em}){3-4} \cmidrule(lr{.75em}){5-6} \cmidrule(lr{.75em}){7-8} 
            & & $d_{min}$ & $S_{loss}$ & $d_{min}$ & $S_{loss}$ & $d_{min}$ & $S_{loss}$ \\
            \cmidrule(lr{.75em}){1-1} \cmidrule(lr{.75em}){2-2} \cmidrule(lr{.75em}){3-4} \cmidrule(lr{.75em}){5-6} \cmidrule(lr{.75em}){7-8} 
    
            \multirow{4}{*}{\makecell[c]{CIFAR-100}} 
            & random & 0 & 0.1094 & 5 & 0.0767 & 17 & 0.0612 \\
            & CSQ & 0 & 0.1074 & 4 & 0.0697 & 32 & 0.0528 \\
            & MDS & 4 & 0.0993 & 10 & 0.0698 & 32 & 0.0528 \\
            & SHC & 4 & \textbf{0.0545} & 10 & \textbf{0.0268} & 24 & \textbf{0.0321} \\ 
            \cmidrule(lr{.75em}){1-1} \cmidrule(lr{.75em}){2-2} \cmidrule(lr{.75em}){3-4} \cmidrule(lr{.75em}){5-6} \cmidrule(lr{.75em}){7-8}
            
            \multirow{4}{*}{\makecell[c]{Stanford Cars-A}} 
            & random & 0 & 0.0876 & 3 & 0.0569 & 14 & 0.0413 \\
            & CSQ & 0 & 0.0907 & 4 & 0.0558 & 14 & 0.0379 \\
            & MDS & 4 & 0.0831 & 10 & 0.0534 & 23 & 0.0375 \\
            & SHC & 4 & \textbf{0.0623} & 10 & \textbf{0.0308} & 23 & \textbf{0.0205} \\ 
            \cmidrule(lr{.75em}){1-1} \cmidrule(lr{.75em}){2-2} \cmidrule(lr{.75em}){3-4} \cmidrule(lr{.75em}){5-6} \cmidrule(lr{.75em}){7-8}
            
            \multirow{4}{*}{\makecell[c]{Stanford Cars-B}} 
            & random & 0 & 0.0909 & 3 & 0.0592 & 14 & 0.0444 \\
            & CSQ & 0 & 0.0932 & 4 & 0.0583 & 14 & 0.0410 \\
            & MDS & 4 & 0.0861 & 10 & 0.0557 & 23 & 0.0410 \\
            & SHC & 4 & \textbf{0.0706} & 10 & \textbf{0.0301} & 23 & \textbf{0.0237} \\ 
            \cmidrule(lr{.75em}){1-1} \cmidrule(lr{.75em}){2-2} \cmidrule(lr{.75em}){3-4} \cmidrule(lr{.75em}){5-6} \cmidrule(lr{.75em}){7-8}
    
            \multirow{4}{*}{\makecell[c]{NABirds-A}} 
            & random & 0 & 0.0720 & 3 & 0.0406 & 13 & 0.0252 \\
            & CSQ & 0 & 0.0754 & 4 & 0.0410 & 16 & 0.0249 \\
            & MDS & 3 & 0.0702 & 9 & 0.0393 & 21 & 0.0242 \\
            & SHC & 3 & \textbf{0.0664} & 9 & \textbf{0.0316} & 21 & \textbf{0.0140} \\ 
            \cmidrule(lr{.75em}){1-1} \cmidrule(lr{.75em}){2-2} \cmidrule(lr{.75em}){3-4} \cmidrule(lr{.75em}){5-6} \cmidrule(lr{.75em}){7-8}
    
            \multirow{4}{*}{\makecell[c]{NABirds-B}} 
            & random & 0 & 0.0725 & 3 & 0.0413 & 13 & 0.0258 \\
            & CSQ & 0 & 0.0761 & 4 & 0.0417 & 16 & 0.0255 \\
            & MDS & 3 & 0.0710 & 9 & 0.0398 & 21 & 0.0248 \\
            & SHC & 3 & \textbf{0.0668} & 9 & \textbf{0.0315} & 21 & \textbf{0.0144} \\ 
              
            \bottomrule
        \end{tabular}
    }
\end{table*}

\subsubsection{Retrieval Performances of Different Approaches} \label{Retrieval Accuracies}

\begin{table*}[!h]
	\centering
	\caption{Different methods’ mAP@topK=100 values on CIFAR-100, Stanford Cars, and NABirds datasets (the code length is set to 16, 32 and 64 bits).}
	\label{tab:map@100}
	\setlength{\tabcolsep}{0.5mm}
	{
		\renewcommand\arraystretch{1.5}
        \small
		\begin{tabular}{c | ccc | ccc | ccc | ccc | ccc}
			\toprule
			\multirow{2}{*}{\textbf{Methods}} & \multicolumn{3}{c|}{\textbf{CIFAR-100}} & \multicolumn{3}{c|}{\textbf{Stanford Cars-A}}  & \multicolumn{3}{c|}{\textbf{Stanford Cars-B}} & \multicolumn{3}{c|}{\textbf{NABirds-A}} & \multicolumn{3}{c}{\textbf{NABirds-B}}   \\ \cmidrule(lr{.75em}){2-4} \cmidrule(lr{.75em}){5-7} \cmidrule(lr{.75em}){8-10} \cmidrule(lr{.75em}){11-13} \cmidrule(lr{.75em}){14-16}
			~ & 16 bits & 32 bits & 64 bits & 16 bits & 32 bits & 64 bits & 16 bits & 32 bits & 64 bits & 16 bits & 32 bits & 64 bits & 16 bits & 32 bits & 64 bits  \\ \cmidrule(lr{.75em}){1-1} \cmidrule(lr{.75em}){2-4} \cmidrule(lr{.75em}){5-7} \cmidrule(lr{.75em}){8-10} \cmidrule(lr{.75em}){11-13} \cmidrule(lr{.75em}){14-16}
			DPSH & 0.1572  & 0.3541  & 0.5054  & 0.0308  & 0.0533  & 0.1421  & 0.0574  & 0.0989  & 0.1264  & 0.0107  & 0.0150  & 0.0157  & 0.0117  & 0.0191  & 0.0403   \\
			DTSH & 0.4939  & 0.5416  & 0.5898  & 0.4236  & 0.5425  & 0.6589  & 0.3533  & 0.4781  & 0.5339  & 0.1237  & 0.1645  & 0.2422  & 0.0145  & 0.1487  & 0.1811   \\
			HashNet & 0.2085  & 0.4115  & 0.3914  & 0.1216  & 0.2641  & 0.2554  & 0.0644  & 0.2134  & 0.2134  & 0.0566  & 0.1737  & 0.1902  & 0.0385  & 0.1232  & 0.1661   \\
			GreedyHash & 0.5690  & 0.6013  & 0.5941  & \underline{0.5998}  & 0.6686  & 0.6962  & \underline{0.3966}  & \underline{0.4942}  & 0.5375  & 0.4687  & 0.5545  & 0.5886  & \underline{0.3264}  & 0.4115  & 0.4541   \\
			IDHN & 0.1127  & 0.2839  & 0.4883  & 0.0293  & 0.0433  & 0.3322  & 0.0444  & 0.1326  & 0.2728  & 0.0139  & 0.0137  & 0.0151  & 0.0148  & 0.0247  & 0.0401   \\
			CSQ & 0.6123  & \underline{0.6559}  & {0.6620}  & 0.5401  & \underline{0.6827}  & 0.7342  & 0.3219  & 0.4895  & \underline{0.5853}  & 0.4540  & 0.5699  & 0.6243  & 0.2956  & 0.4309  & 0.4984   \\
			MDS & \underline{0.6199}  & 0.6519  & \underline{0.6649}  & {0.5866}  & 0.6801  & \underline{0.7396}  & 0.3238  & 0.4804  & 0.5697  & \underline{0.4812}  & \underline{0.5791}  & \underline{0.6248}  & 0.3247  & \underline{0.4359}  & \underline{0.4990}   \\
			LTH & 0.5687  & 0.6171  & 0.6405  & 0.5753  & 0.6612  & 0.7157  & 0.3768  & 0.4915  & 0.5406  & 0.4461  & 0.5572  & 0.5926  & 0.2861  & 0.4004  & 0.4521   \\
			OrthoHash & 0.5709  & 0.5871  & 0.5866  & 0.4824  & 0.5726  & 0.6064  & 0.2762  & 0.3888  & 0.4496  & 0.4583  & 0.5152  & 0.5242  & 0.2721  & 0.3420  & 0.3501   \\ \cmidrule(lr{.75em}){1-1} \cmidrule(lr{.75em}){2-4} \cmidrule(lr{.75em}){5-7} \cmidrule(lr{.75em}){8-10} \cmidrule(lr{.75em}){11-13} \cmidrule(lr{.75em}){14-16}
			\textbf{SHC} & \textbf{0.6360} & \textbf{0.6629} & \textbf{0.6672} & \textbf{0.6674} & \textbf{0.7448} & \textbf{0.7776} & \textbf{0.4440} & \textbf{0.5859} & \textbf {0.6288} & \textbf{0.4986} & \textbf{0.6117} & \textbf{0.6586} & \textbf{0.3552} & \textbf{0.4753} & \textbf{0.5289}  \\
			\bottomrule
		\end{tabular}
	}
\end{table*}

\begin{table*}[!h]
	\centering
	\caption{Different methods’ mAP@topK=1000 values on CIFAR-100, Stanford Cars, and NABirds datasets (the code length is set to 16, 32 and 64 bits).}
	\label{tab:map@1000}
	\setlength{\tabcolsep}{0.5mm}
	{
		\renewcommand\arraystretch{1.5}
        \small
		\begin{tabular}{c | ccc | ccc | ccc | ccc | ccc}
			\toprule
			\multirow{2}{*}{\textbf{Methods}} & \multicolumn{3}{c|}{\textbf{CIFAR-100}} & \multicolumn{3}{c|}{\textbf{Stanford Cars-A}}  & \multicolumn{3}{c|}{\textbf{Stanford Cars-B}} & \multicolumn{3}{c|}{\textbf{NABirds-A}} & \multicolumn{3}{c}{\textbf{NABirds-B}}   \\ \cmidrule(lr{.75em}){2-4} \cmidrule(lr{.75em}){5-7} \cmidrule(lr{.75em}){8-10} \cmidrule(lr{.75em}){11-13} \cmidrule(lr{.75em}){14-16}
			~ & 16 bits & 32 bits & 64 bits & 16 bits & 32 bits & 64 bits & 16 bits & 32 bits & 64 bits & 16 bits & 32 bits & 64 bits & 16 bits & 32 bits & 64 bits  \\ \cmidrule(lr{.75em}){1-1} \cmidrule(lr{.75em}){2-4} \cmidrule(lr{.75em}){5-7} \cmidrule(lr{.75em}){8-10} \cmidrule(lr{.75em}){11-13} \cmidrule(lr{.75em}){14-16}
			DPSH & 0.1423  & 0.2991  & 0.4332  & 0.0262  & 0.0255  & 0.0809  & 0.0329  & 0.0483  & 0.0606  & 0.0124  & 0.0148  & 0.0143  & 0.0132  & 0.0179  & 0.0242   \\ 
			DTSH & 0.4260  & 0.4688  & 0.5420  & 0.3898  & 0.5158  & 0.6468  & 0.2789  & 0.4002  & 0.4509  & 0.0704  & 0.1032  & 0.1748  & 0.0519  & 0.0812  & 0.1057   \\ 
			HashNet & 0.2032  & 0.3582  & 0.3395  & 0.1039  & 0.2404  & 0.2186  & 0.0586  & 0.1572  & 0.1626  & 0.0582  & 0.1590  & 0.1357  & 0.0424  & 0.0950  & 0.1023   \\ 
			GreedyHash & 0.5230  & 0.5619  & 0.5574  & \underline{0.5858}  & 0.6598  & 0.6895  & \underline{0.3112}  & \underline{0.4104}  & 0.4659  & 0.4528  & 0.5463  & 0.5843  & \underline{0.2589}  & 0.3361  & 0.3914   \\ 
			IDHN & 0.0983  & 0.2378  & 0.4208  & 0.0201  & 0.0224  & 0.2990  & 0.0257  & 0.0884  & 0.2065  & 0.0145  & 0.0134  & 0.0143  & 0.0164  & 0.0197  & 0.0226   \\ 
			CSQ & 0.5686  & \underline{0.6113}  & 0.6117  & 0.5200  & \underline{0.6742}  & 0.7295  & 0.2406  & 0.4026  & \underline{0.5064}  & 0.4427  & 0.5633  & 0.6213  & 0.2279  & 0.3562  & 0.4297   \\ 
			MDS & \underline{0.5775}  & 0.6074  & \underline{0.6167}  & 0.5732  & 0.6722  & \underline{0.7358}  & 0.2426  & 0.3961  & 0.4959  & \underline{0.4736}  & \underline{0.5743}  & \underline{0.6234}  & 0.2536  & \underline{0.3640}  & \underline{0.4372}   \\ 
			LTH & 0.5255  & 0.5718  & 0.5859  & 0.5568  & 0.6485  & 0.7093  & {0.2882}  & 0.4012  & 0.4589  & 0.4248  & 0.5464  & 0.5884  & 0.2166  & 0.3195  & 0.3715   \\ 
			OrthoHash & 0.4869  & 0.4955  & 0.5028  & 0.4327  & 0.5241  & 0.5563  & 0.1908  & 0.2885  & 0.3426  & 0.4215  & 0.4785  & 0.4840  & 0.1865  & 0.2494  & 0.2543   \\
			\cmidrule(lr{.75em}){1-1} \cmidrule(lr{.75em}){2-4} \cmidrule(lr{.75em}){5-7} \cmidrule(lr{.75em}){8-10} \cmidrule(lr{.75em}){11-13} \cmidrule(lr{.75em}){14-16}
			\textbf{SHC} & \textbf{0.5936}  & \textbf{0.6166}  & \textbf{0.6170}  & \textbf{0.6525}  & \textbf{0.7379}  & \textbf{0.7762}  & \textbf{0.3541}  & \textbf{0.5033}  & \textbf{0.5573}  & \textbf{0.4878}  & \textbf{0.6072}  & \textbf{0.6608}  & \textbf{0.2813}  & \textbf{0.3991}  & \textbf{0.4575}   \\
			\bottomrule
		\end{tabular}
	}
\end{table*}

\begin{table*}[!h]
	\centering
	\caption{Different methods’ mAP@topK=ALL values on CIFAR-100, Stanford Cars, and NABirds datasets (the code length is set to 16, 32 and 64 bits).}
	\label{tab:map@ALL}
	\setlength{\tabcolsep}{0.5mm}
	{
		\renewcommand\arraystretch{1.5}
        \small
		\begin{tabular}{c | ccc | ccc | ccc | ccc | ccc}
			\toprule
			\multirow{2}{*}{\textbf{Methods}} & \multicolumn{3}{c|}{\textbf{CIFAR-100}} & \multicolumn{3}{c|}{\textbf{Stanford cars-A}}  & \multicolumn{3}{c|}{\textbf{Stanford cars-B}} & \multicolumn{3}{c|}{\textbf{NABirds-A}} & \multicolumn{3}{c}{\textbf{NABirds-B}}   \\ \cmidrule(lr{.75em}){2-4} \cmidrule(lr{.75em}){5-7} \cmidrule(lr{.75em}){8-10} \cmidrule(lr{.75em}){11-13} \cmidrule(lr{.75em}){14-16}
			~ & 16 bits & 32 bits & 64 bits & 16 bits & 32 bits & 64 bits & 16 bits & 32 bits & 64 bits & 16 bits & 32 bits & 64 bits & 16 bits & 32 bits & 64 bits  \\ \cmidrule(lr{.75em}){1-1} \cmidrule(lr{.75em}){2-4} \cmidrule(lr{.75em}){5-7} \cmidrule(lr{.75em}){8-10} \cmidrule(lr{.75em}){11-13} \cmidrule(lr{.75em}){14-16}
			DPSH & 0.0833  & 0.2403  & 0.3697  & 0.0128  & 0.0130  & 0.0587  & 0.0199  & 0.0300  & 0.0383  & 0.0059  & 0.0062  & 0.0059  & 0.0056  & 0.0061  & 0.0060   \\
			DTSH & 0.3497  & 0.3914  & 0.4513  & 0.3850  & 0.5120  & 0.6418  & \underline{0.2573}  & \underline{0.3792}  & 0.4281  & 0.0547  & 0.0918  & 0.1653  & 0.0318  & 0.0603  & 0.0843   \\
			HashNet & 0.1487  & 0.3041  & 0.2930  & 0.0996  & 0.2383  & 0.2174  & 0.0470  & 0.1441  & 0.1519  & 0.0562  & 0.1557  & 0.1347  & 0.0342  & 0.0754  & 0.0872   \\
			GreedyHash & 0.3809  & 0.4204  & 0.4238  & \underline{0.5764}  & 0.6497  & 0.6802  & 0.2406  & 0.3320  & 0.3929  & 0.4407  & 0.5330  & 0.5721  & \underline{0.1646}  & 0.2259  & 0.2824   \\
			IDHN & 0.0642  & 0.1797  & 0.3652  & 0.0094  & 0.0112  & 0.2907  & 0.0137  & 0.0698  & 0.1787  & 0.0077  & 0.0068  & 0.0062  & 0.0072  & 0.0077  & 0.0066   \\ 
			CSQ & 0.4272  & \underline{0.4875}  & 0.5185  & 0.5087  & \underline{0.6651}  & 0.7233  & 0.1785  & 0.3301  & \underline{0.4462}  & 0.4311  & 0.5535  & 0.6146  & 0.1387  & 0.2462  & 0.3244   \\ 
			MDS & \underline{0.4388}  & 0.4856  & \underline{0.5205}  & 0.5613  & 0.6597  & \underline{0.7273}  & 0.1837  & 0.3251  & 0.4313  & \underline{0.4627}  & \underline{0.5650}  & \underline{0.6155}  & 0.1597  & \underline{0.2542}  & \underline{0.3313}   \\ 
			LTH & 0.3890  & 0.4544  & 0.4980  & 0.5449  & 0.6395  & 0.7032  & 0.2221  & 0.3332  & 0.4065  & 0.4066  & 0.5350  & 0.5811  & 0.1334  & 0.2380  & 0.3135   \\ 
			OrthoHash & 0.3813  & 0.3948  & 0.4062  & 0.4225  & 0.5154  & 0.5491  & 0.1572  & 0.2537  & 0.3147  & 0.4121  & 0.4702  & 0.4753  & 0.1403  & 0.2069  & 0.2175   \\ \cmidrule(lr{.75em}){1-1} \cmidrule(lr{.75em}){2-4} \cmidrule(lr{.75em}){5-7} \cmidrule(lr{.75em}){8-10} \cmidrule(lr{.75em}){11-13} \cmidrule(lr{.75em}){14-16}
			\textbf{SHC} & \textbf{0.4838}  & \textbf{0.5186}  & \textbf{0.5262}  & \textbf{0.6433}  & \textbf{0.7300}  & \textbf{0.7720}  & \textbf{0.2976}  & \textbf{0.4547}  & \textbf{0.5139}  & \textbf{0.4755}  & \textbf{0.6001}  & \textbf{0.6566}  & \textbf{0.1926}  & \textbf{0.3185}  & \textbf{0.3988}   \\
			\bottomrule
		\end{tabular}
	}
\end{table*}

Table \ref{tab:map@100}, \ref{tab:map@1000}, and \ref{tab:map@ALL} present the MAP@topK (topK=100, 1000, and ALL) values of our SHC approach and 9 baselines on 5 datasets with 3 different hash code lengths (i.e., 16, 32, and 64 bits). 
In all the tables, the best results are highlighted in bold, while the second-best are underlined. 
Obviously, SHC performs better than other methods, achieving an improvement of \textbf{(+0.13 \% $\sim$ +11.53\%), (+5.14 \% $\sim$ +13.77\%), (+7.43 \% $\sim$ +22.64\%), (+2.77 \% $\sim$ +6.68\%), (+4.64 \% $\sim$ +25.30\%)} over the second-best competitor on CIFAR-100, Stanford Cars-A, Stanford Cars-B, NABirds-A, and NABirds-B datasets, respectively. 

To be specific, the improvements brought by our SHC on CIFAR-100 are not as significant as those on the other two datasets. 
This is attributed to the fact that the CIFAR-100 dataset has fewer categories, allowing for larger Hamming distances between hash centers. 
In other words, the Hamming space is still relatively ``spacious''. 
By the way, as the hash code length becomes larger, the Hamming space becomes much more ``spacious'', leading to a diminishing improvement.

On the Stanford Cars dataset, the largest improvement is observed with a hash code length of 32 bits. 
This could be because, at 16 bits, the Hamming space becomes too ``crowded'', leaving insufficient room for semantic adjustments by the hash centers, resulting in a decrease in improvement. 
On the other hand, when the hash code length is 64 bits, the Hamming space becomes too ``spacious'', similar to the CIFAR-100 dataset, which also leads to a decrease in improvement. 
However, a 32-bit Hamming space strikes the right balance for the Stanford Cars dataset with 196 categories, achieving the best optimization results.

In our experiments on the NABirds dataset, which has the largest number of categories, the improvements increase with the hash code length. 
This is consistent with the above analysis, where the 16-bit and 32-bit Hamming spaces are too ``crowded'' (555 classes); 
however, a 64-bit Hamming space allows for better semantic optimization by the hash centers without being overly ``spacious''.

% PR曲线。在几乎所有数据集上，我们方法的PR曲线均为最优。在CIFAR-100数据集上，随着哈希编码长度的增加，我们方法在PR曲线上的优势会逐渐降低；在Stanford Cars数据集上，我们的方法在哈希编码长度为32bits时PR曲线的优势最为明显，在NABirds数据集上，则是随着哈希编码长度的增加，我们方法在PR曲线上的优势越发明显，这都与\ref{Retrieval Accuracies}中的分析一致。

Fig.~\ref{fig:precision-recall-curves} shows the retrieval performance w.r.t. Precision-Recall curves on five datasets. Our method exhibits the optimal PR curves on almost all datasets. On the CIFAR-100 dataset, as the length of the hash codes increases, the advantage of our method in terms of the PR curve gradually diminishes. On the Stanford Cars dataset, our method demonstrates the most significant advantage in the PR curve when the hash code length is 32 bits. On the NABirds dataset, the advantage of our method in the PR curve becomes more pronounced with the increase in hash code length. These observations align with the above analysis presented in Section~\ref{Retrieval Accuracies}.

% Precision@topK曲线。在几乎所有数据集上，我们方法的Precision-topK曲线都为最优，并且曲线的变化趋势与\ref{Metrics}中的理论情况一致，即随着topK的增加，precision会逐渐降低。在NABirds数据集哈希编码长度为16 bits的情况下，我们方法的prcision曲线略有落后，这可能是因为对于类别数量较大的NABirds数据集，较低的哈希编码长度不足以为哈希中心之间提供足够的距离，这导致了precision曲线的降低。

Fig.~\ref{fig:precision-at-topK} shows the retrieval performance w.r.t. Precision-topK curves on five datasets. The Precision-topK curves of our method are optimal on almost all datasets, and the trend of the curves aligns with the theoretical expectations described in Section \ref{Metrics}, where precision gradually decreases as topK increases. However, in the case of the NABirds dataset with a hash code length of 16 bits, our method's precision curve falls slightly behind. This could be attributed to the fact that for the NABirds dataset with a large number of categories, a lower hash code length may not provide sufficient distance between hash centers, resulting in a decrease in the precision curve.

% Recall@topK曲线。在大部分数据集上，我们方法的Recall-topK曲线都为最优，并且曲线的变化趋势与\ref{Metrics}中的理论情况一致，即随着topK的增加，recall会逐渐升高。在Stanford Cars (our seg)数据集上，我们方法的Recall曲线不为最优，这可能与数据集的构成有一定关系，我们划分的数据集训练数据较少，且数据集中类别之间差异不明显，这都可能对Recall有害。在NABirds数据集哈希编码长度为16 bits的情况下，我们方法的recall曲线略有落后，原因可能与precision曲线一致，即对于类别数量较大的NABirds数据集，较低的哈希编码长度不足以为哈希中心之间提供足够的距离，这导致了recall曲线的降低。

Fig.~\ref{fig:recall-at-topK} shows the retrieval performance w.r.t. Recall-topK curves on five datasets. On most datasets, the Recall-topK curves of our method are the best, and the trend of the curves aligns with the theoretical expectations described in Section \ref{Metrics}, where recall gradually increases as topK increases. However, on the Stanford Cars dataset, our method's Recall curves are not always optimal. This could be attributed to the composition of the dataset, where the training data we used is relatively small, and the differences between categories in the dataset may not be significant, which could have a negative impact on Recall.

In the case of the NABirds-B dataset with a hash code length of 16 bits, our method's Recall curve slightly falls behind. The reason could be similar to the precision curve, where a lower hash code length may not provide sufficient distance between hash centers for a dataset with a large number of categories like NABirds, resulting in a decrease in the Recall curve.

\begin{figure}[H]
	%\vspace{-4mm}
	\centering\subfloat[16 bits, CIFAR-100]{\includegraphics[width=3.7cm]{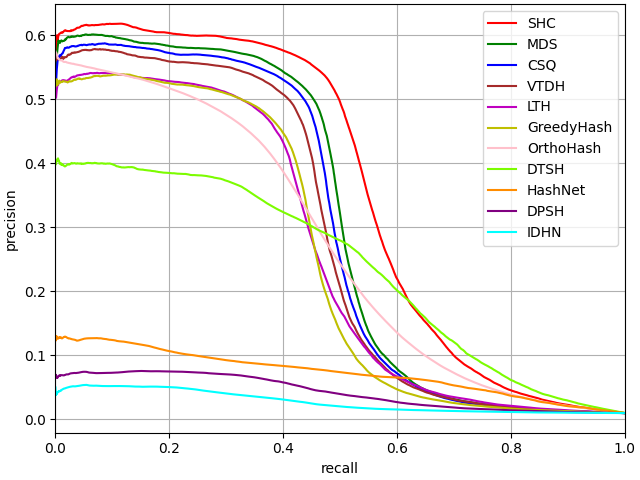}}\hspace{4mm}
	\centering\subfloat[32 bits, CIFAR-100]{\includegraphics[width=3.7cm]{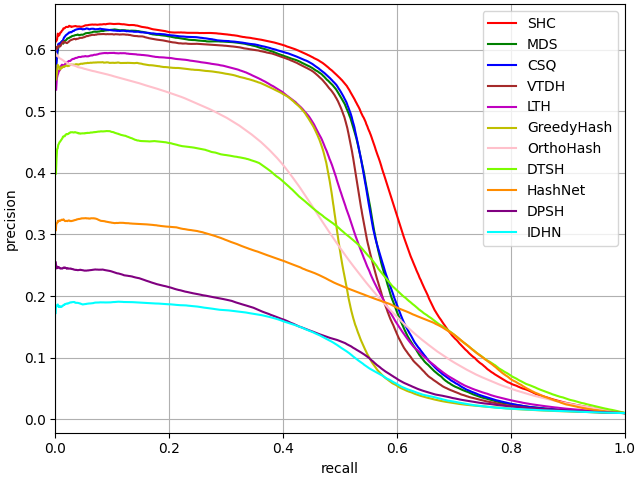}}\hspace{4mm}
	\centering\subfloat[64 bits, CIFAR-100]{\includegraphics[width=3.7cm]{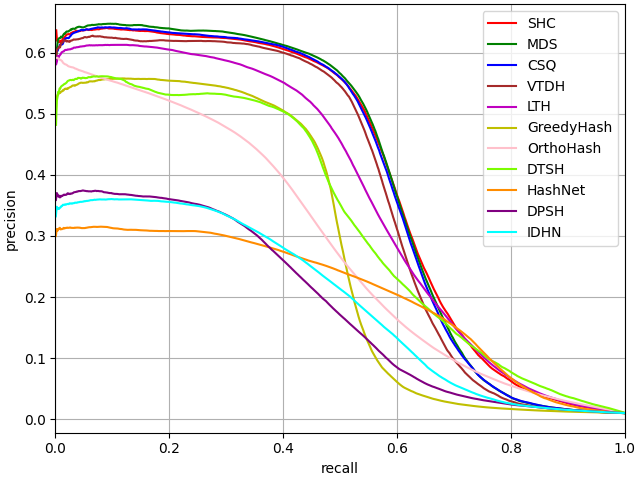}}
	
	%\vspace{-4mm}
	\centering\subfloat[16 bits, Stanford Cars-A]{\includegraphics[width=3.7cm]{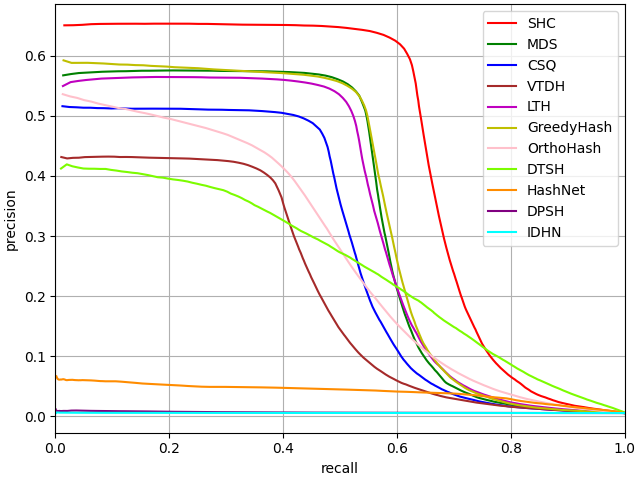}}\hspace{4mm}
	\centering\subfloat[32 bits, Stanford Cars-A]{\includegraphics[width=3.7cm]{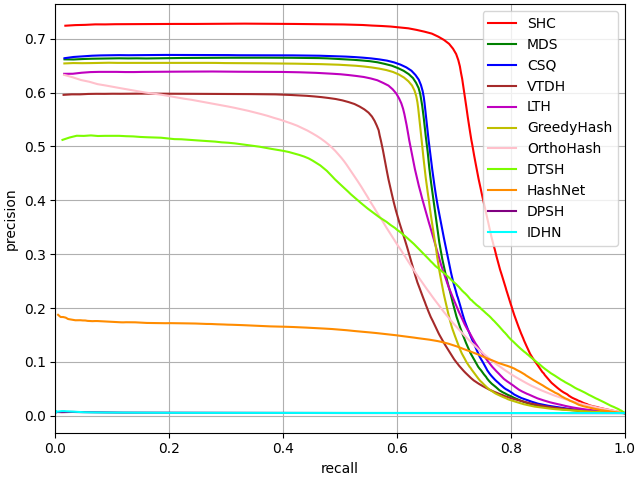}}\hspace{4mm}
	\centering\subfloat[64 bits, Stanford Cars-A]{\includegraphics[width=3.7cm]{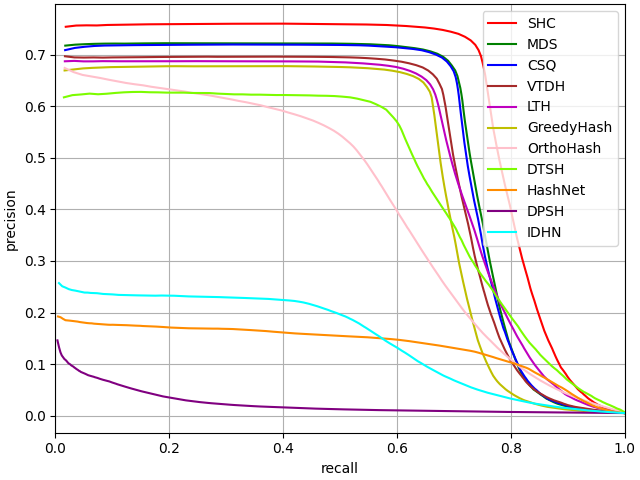}}
	
	%\vspace{-4mm}
	\centering\subfloat[16 bits, Stanford Cars-B]{\includegraphics[width=3.7cm]{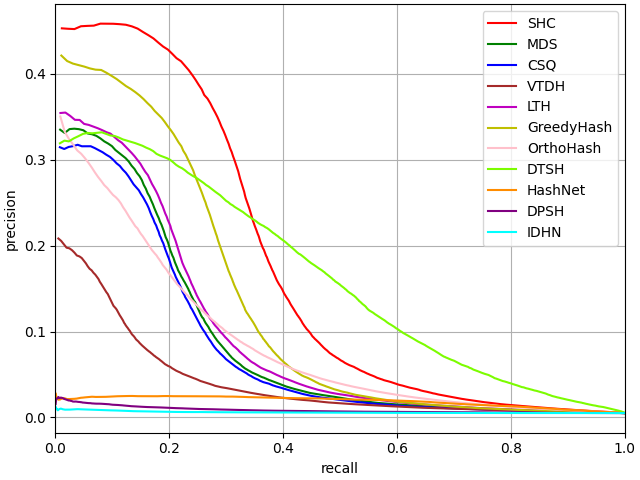}}\hspace{4mm}
	\centering\subfloat[32 bits, Stanford Cars-B]{\includegraphics[width=3.7cm]{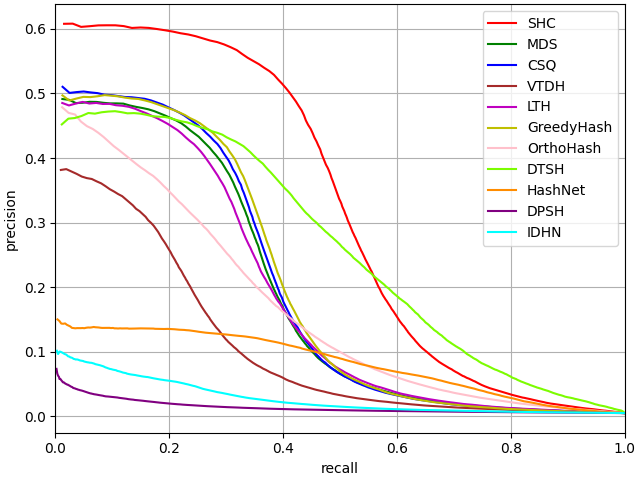}}\hspace{4mm}
	\centering\subfloat[64 bits, Stanford Cars-B]{\includegraphics[width=3.7cm]{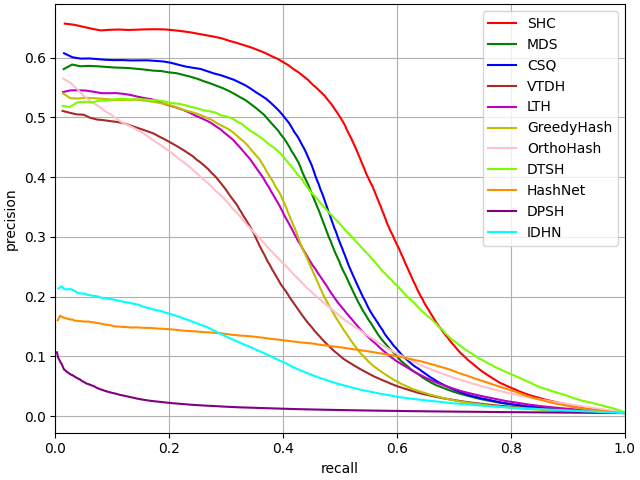}}
	
	%\vspace{-4mm}
	\centering\subfloat[16 bits, NABirds-A]{\includegraphics[width=3.7cm]{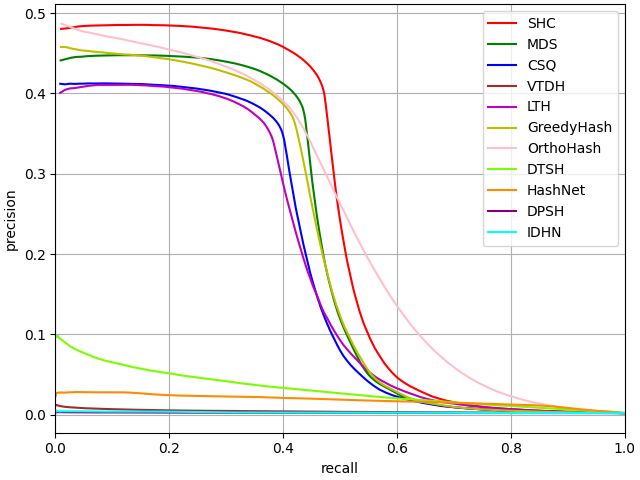}}\hspace{4mm}
	\centering\subfloat[32 bits, NABirds-A]{\includegraphics[width=3.7cm]{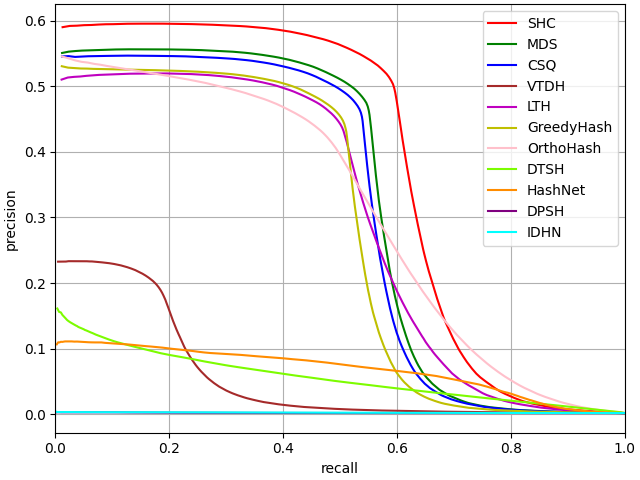}}\hspace{4mm}
	\centering\subfloat[64 bits, NABirds-A]{\includegraphics[width=3.7cm]{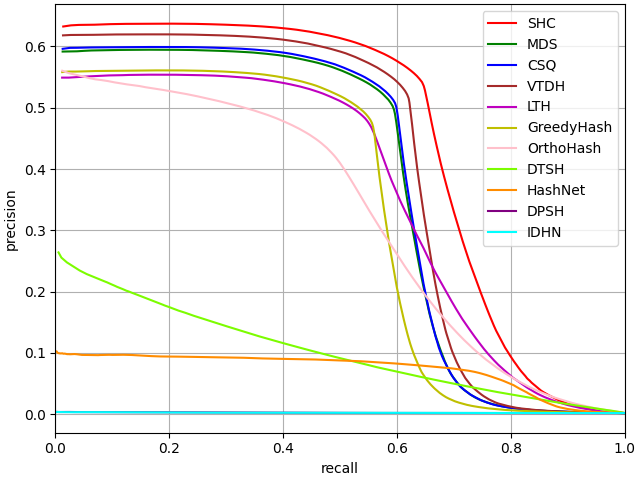}}
	
	%\vspace{-4mm}
	\centering\subfloat[16 bits, NABirds-B]{\includegraphics[width=3.7cm]{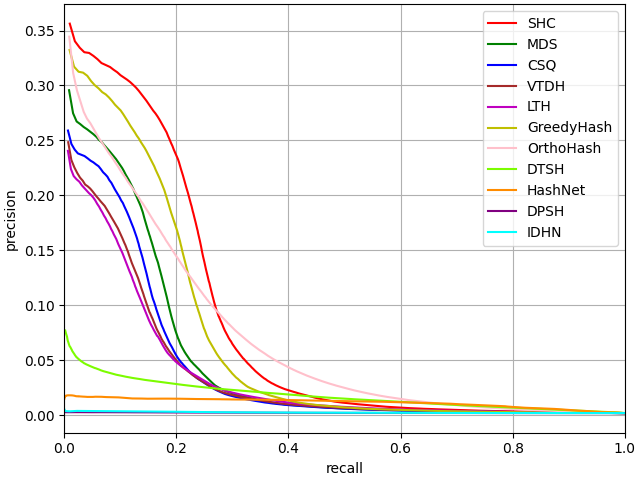}}\hspace{4mm}
	\centering\subfloat[32 bits, NABirds-B]{\includegraphics[width=3.7cm]{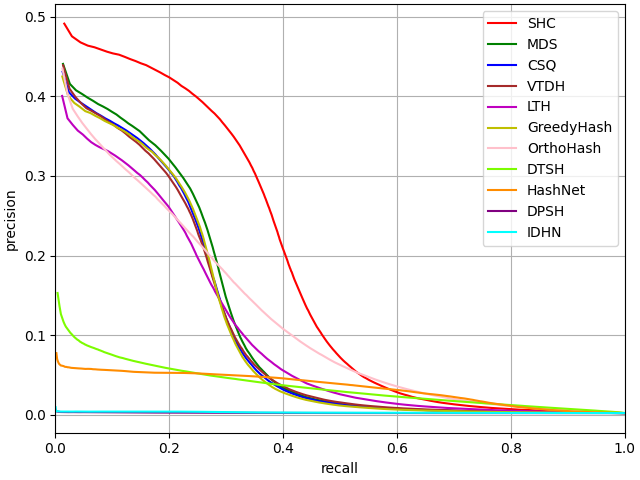}}\hspace{4mm}
	\centering\subfloat[64 bits, NABirds-B]{\includegraphics[width=3.7cm]{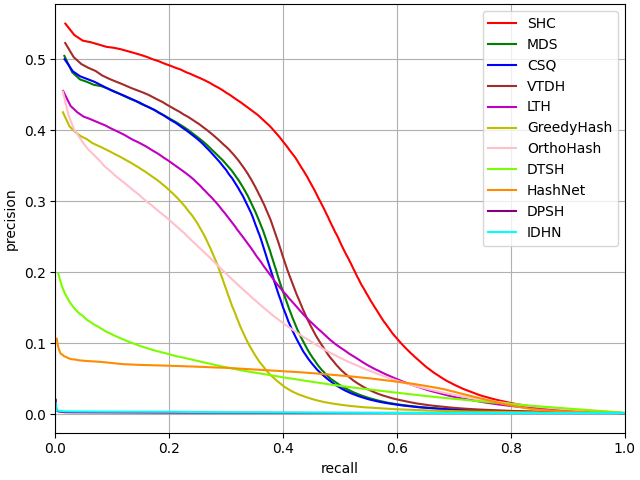}}
	\caption{The Precision-Recall curves of different methods on CIFAR100, Stanford Cars, and NABirds datasets (16, 32, and 64 bits).}
	\label{fig:precision-recall-curves}
\end{figure}

\begin{figure}[H]
	%\vspace{-4mm}
	\centering\subfloat[16 bits, CIFAR-100]{\includegraphics[width=3.7cm]{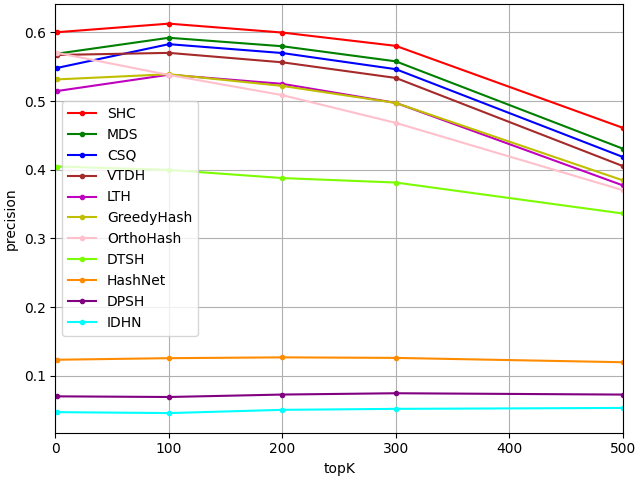}}\hspace{4mm}
	\centering\subfloat[32 bits, CIFAR-100]{\includegraphics[width=3.7cm]{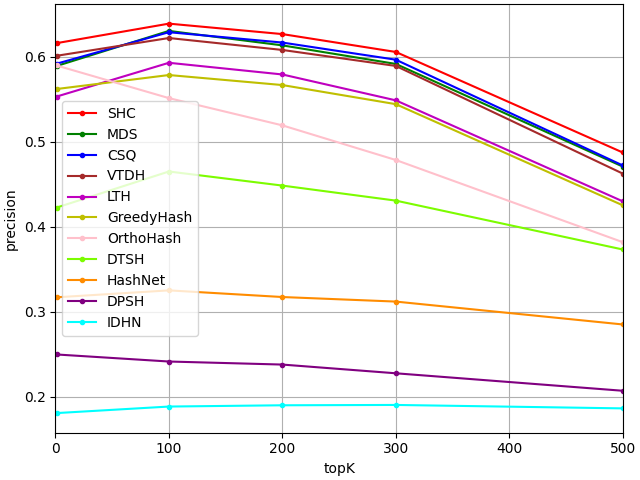}}\hspace{4mm}
	\centering\subfloat[64 bits, CIFAR-100]{\includegraphics[width=3.7cm]{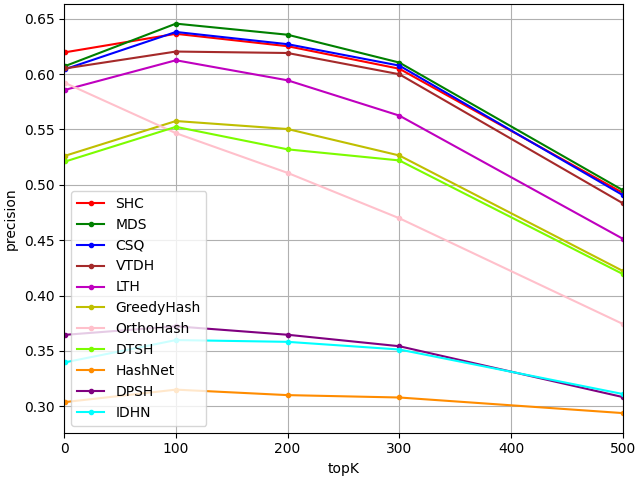}}
	
	%\vspace{-4mm}
	\centering\subfloat[16 bits, Stanford Cars-A]{\includegraphics[width=3.7cm]{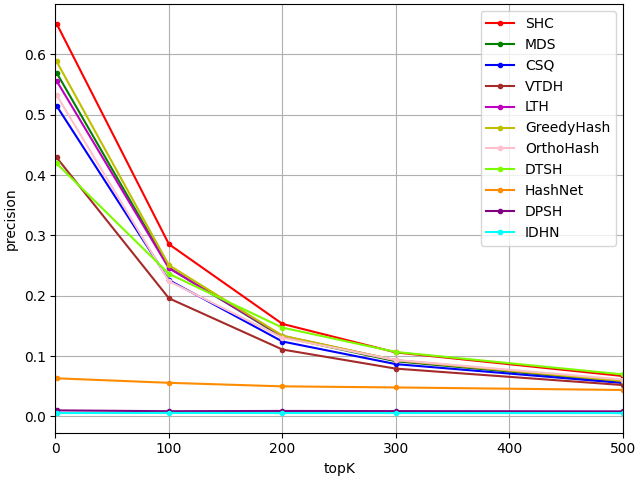}}\hspace{4mm}
	\centering\subfloat[32 bits, Stanford Cars-A]{\includegraphics[width=3.7cm]{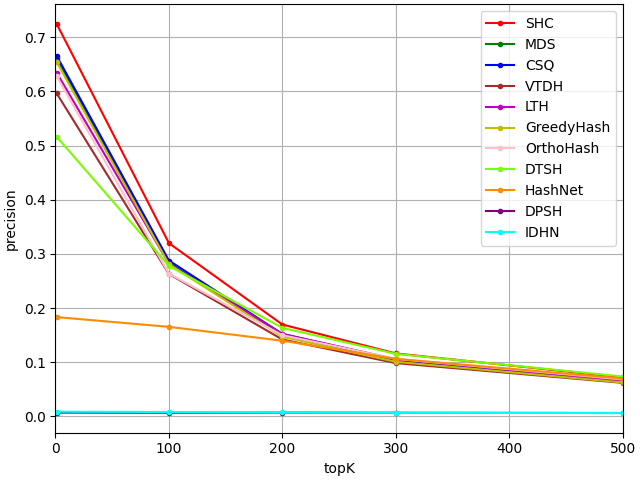}}\hspace{4mm}
	\centering\subfloat[64 bits, Stanford Cars-A]{\includegraphics[width=3.7cm]{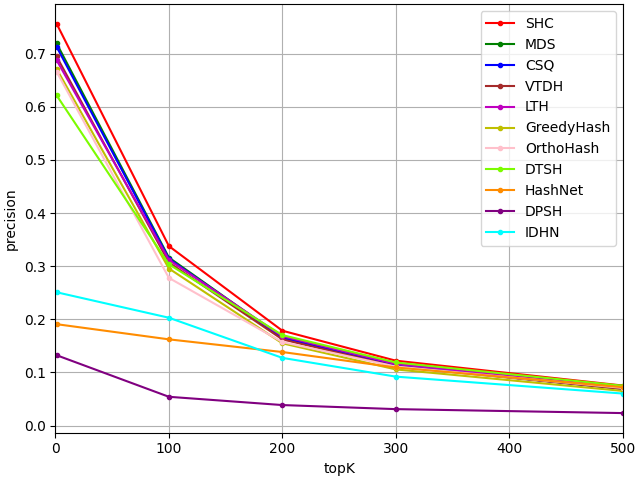}}
	
	%\vspace{-4mm}
	\centering\subfloat[16 bits, Stanford Cars-B]{\includegraphics[width=3.7cm]{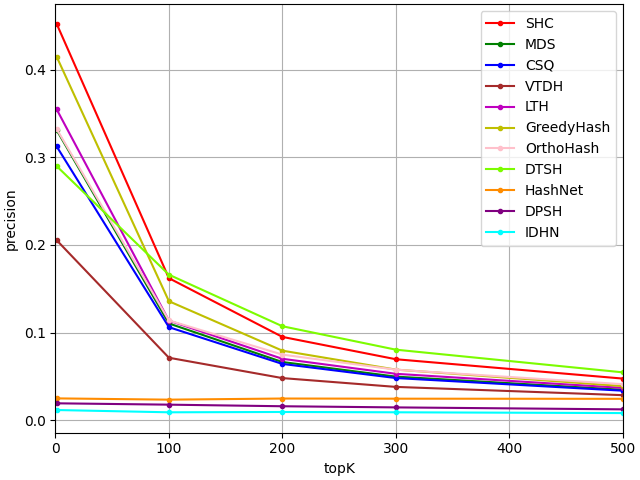}}\hspace{4mm}
	\centering\subfloat[32 bits, Stanford Cars-B]{\includegraphics[width=3.7cm]{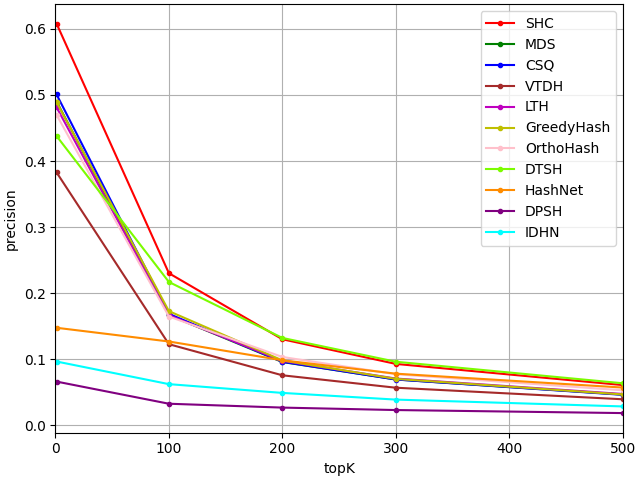}}\hspace{4mm}
	\centering\subfloat[64 bits, Stanford Cars-B]{\includegraphics[width=3.7cm]{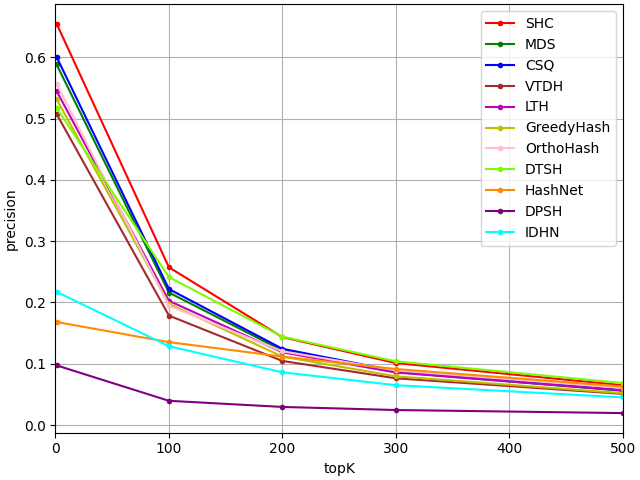}}
	
	%\vspace{-4mm}
	\centering\subfloat[16 bits, NABirds-A]{\includegraphics[width=3.7cm]{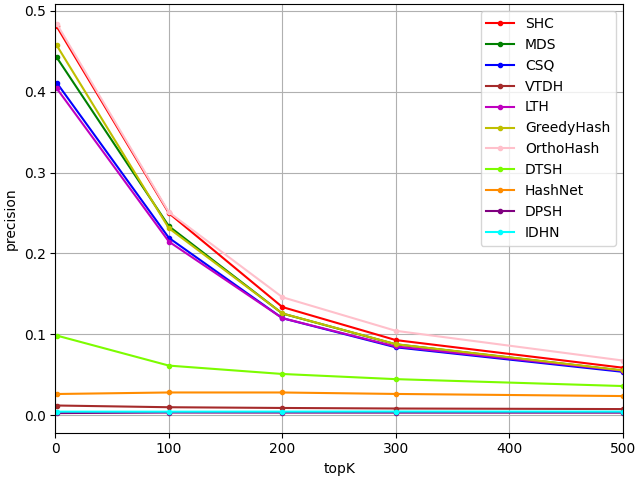}}\hspace{4mm}
	\centering\subfloat[32 bits, NABirds-A]{\includegraphics[width=3.7cm]{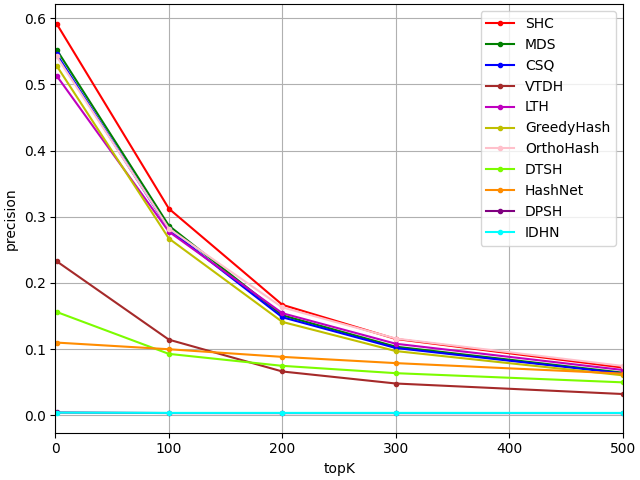}}\hspace{4mm}
	\centering\subfloat[64 bits, NABirds-A]{\includegraphics[width=3.7cm]{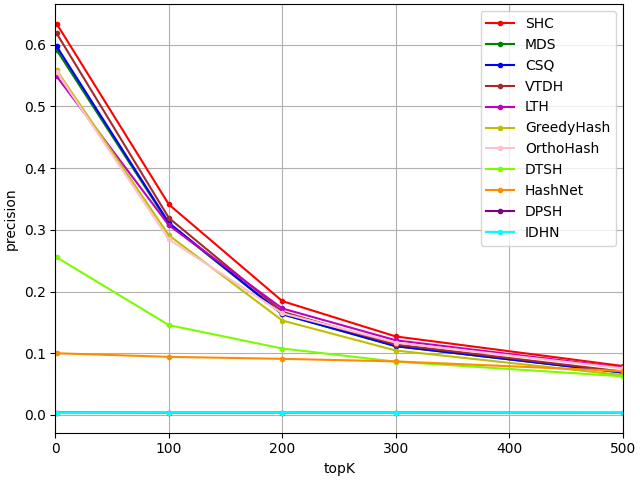}}
	
	%\vspace{-4mm}
	\centering\subfloat[16 bits, NABirds-B]{\includegraphics[width=3.7cm]{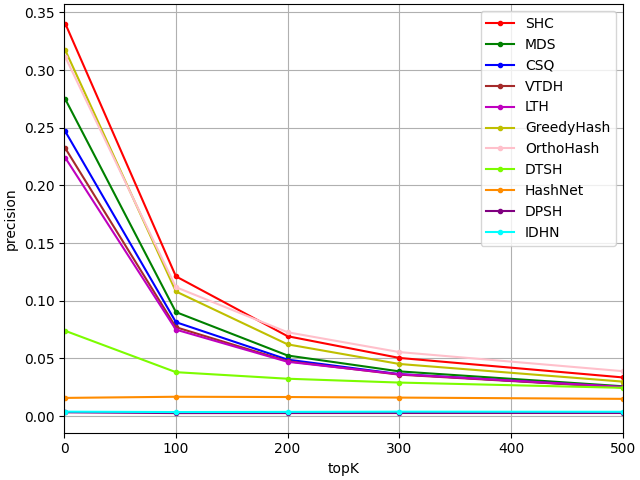}}\hspace{4mm}
	\centering\subfloat[32 bits, NABirds-B]{\includegraphics[width=3.7cm]{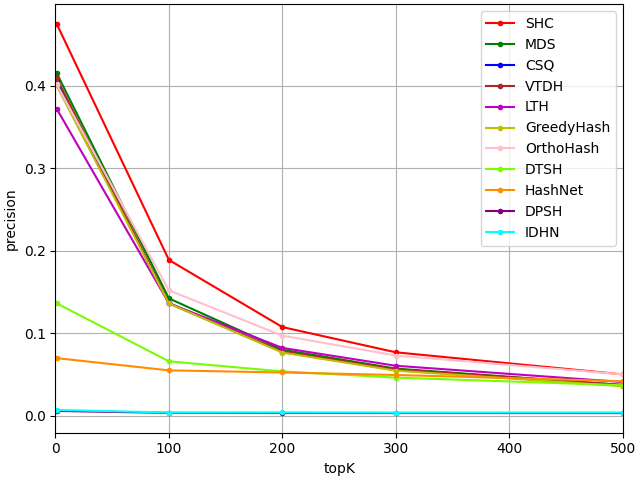}}\hspace{4mm}
	\centering\subfloat[64 bits, NABirds-B]{\includegraphics[width=3.7cm]{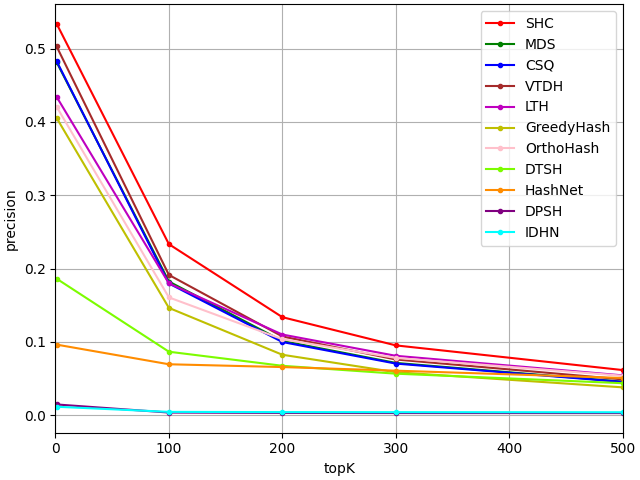}}
	\caption{The Precision@topK of different methods on CIFAR100, Stanford Cars, and NABirds datasets (16, 32, and 64 bits).}
	\label{fig:precision-at-topK}
\end{figure}

\begin{figure}[H]
	%\vspace{-4mm}
	\centering\subfloat[16 bits, CIFAR-100]{\includegraphics[width=3.7cm]{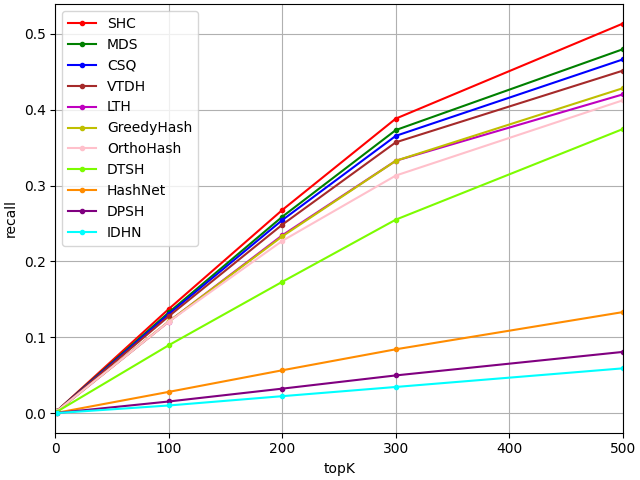}}\hspace{4mm}
	\centering\subfloat[32 bits, CIFAR-100]{\includegraphics[width=3.7cm]{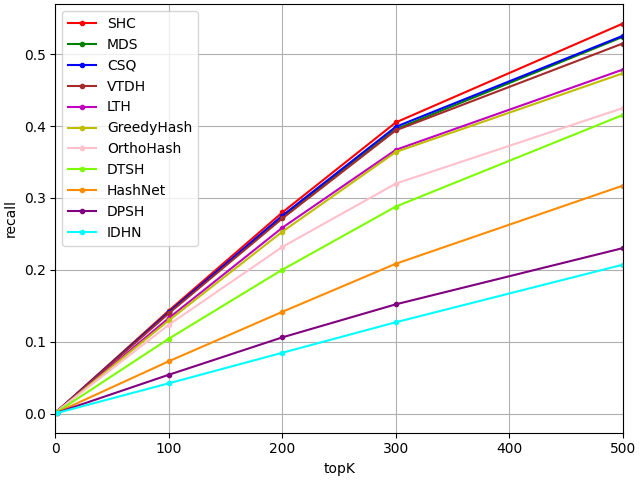}}\hspace{4mm}
	\centering\subfloat[64 bits, CIFAR-100]{\includegraphics[width=3.7cm]{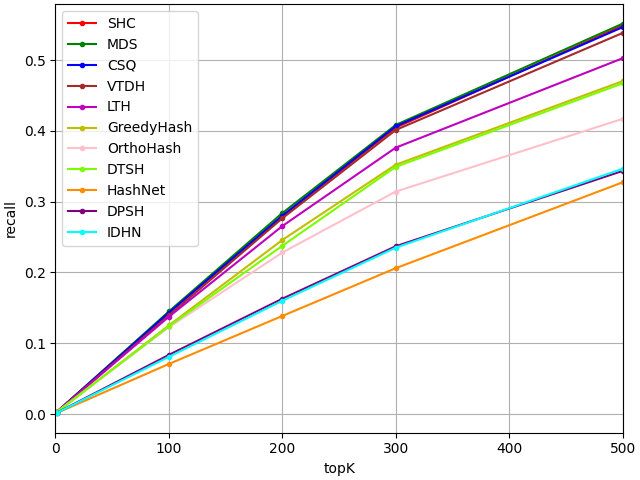}}
	
	%\vspace{-4mm}
	\centering\subfloat[16 bits, Stanford Cars-A]{\includegraphics[width=3.7cm]{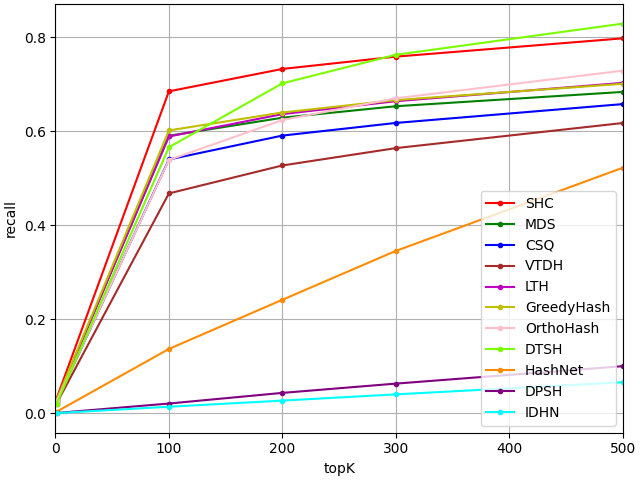}}\hspace{4mm}
	\centering\subfloat[32 bits, Stanford Cars-A]{\includegraphics[width=3.7cm]{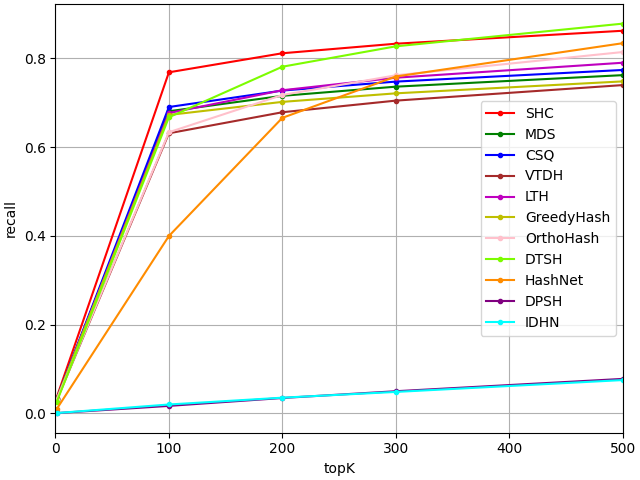}}\hspace{4mm}
	\centering\subfloat[64 bits, Stanford Cars-A]{\includegraphics[width=3.7cm]{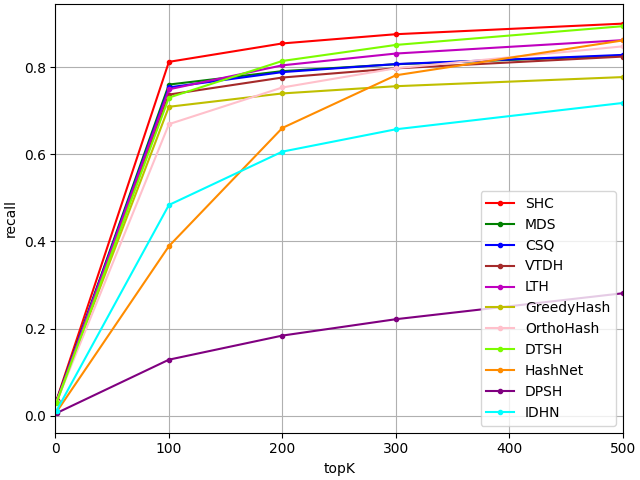}}
	
	%\vspace{-4mm}
	\centering\subfloat[16 bits, Stanford Cars-B]{\includegraphics[width=3.7cm]{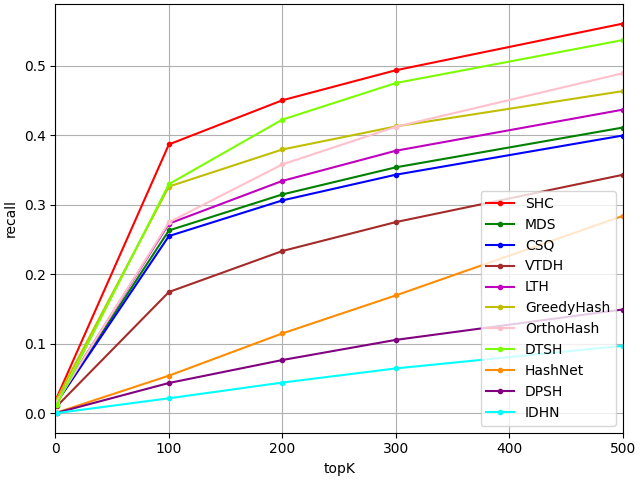}}\hspace{4mm}
	\centering\subfloat[32 bits, Stanford Cars-B]{\includegraphics[width=3.7cm]{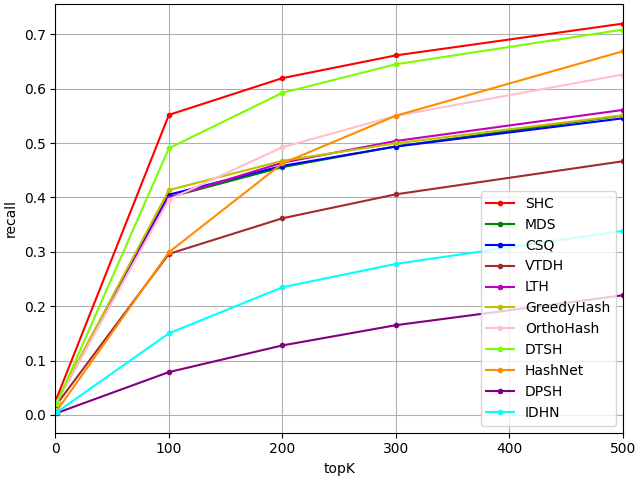}}\hspace{4mm}
	\centering\subfloat[64 bits, Stanford Cars-B]{\includegraphics[width=3.7cm]{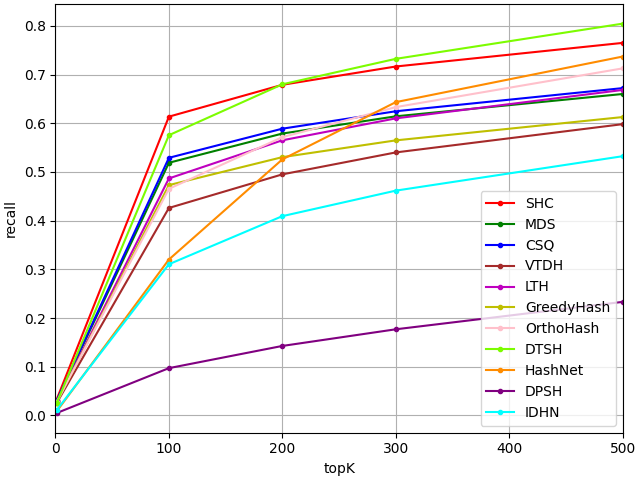}}
	
	%\vspace{-4mm}
	\centering\subfloat[16 bits, NABirds-A]{\includegraphics[width=3.7cm]{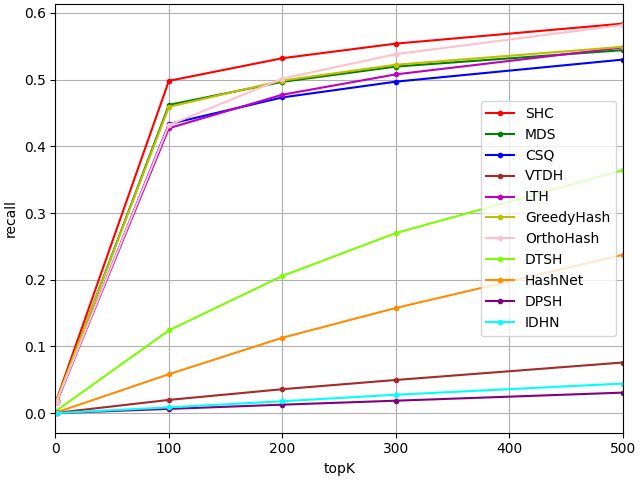}}\hspace{4mm}
	\centering\subfloat[32 bits, NABirds-A]{\includegraphics[width=3.7cm]{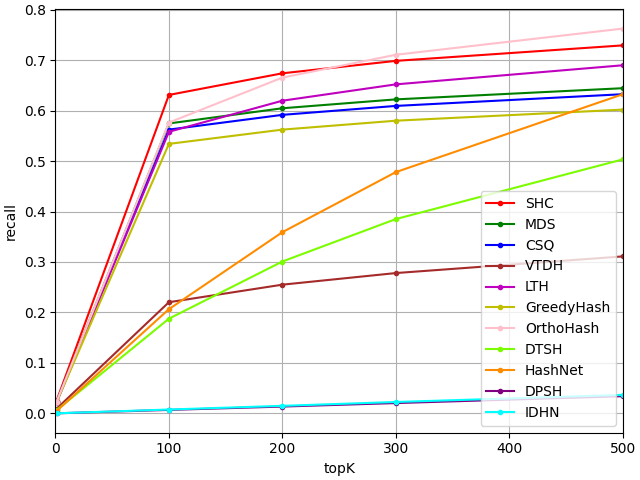}}\hspace{4mm}
	\centering\subfloat[64 bits, NABirds-A]{\includegraphics[width=3.7cm]{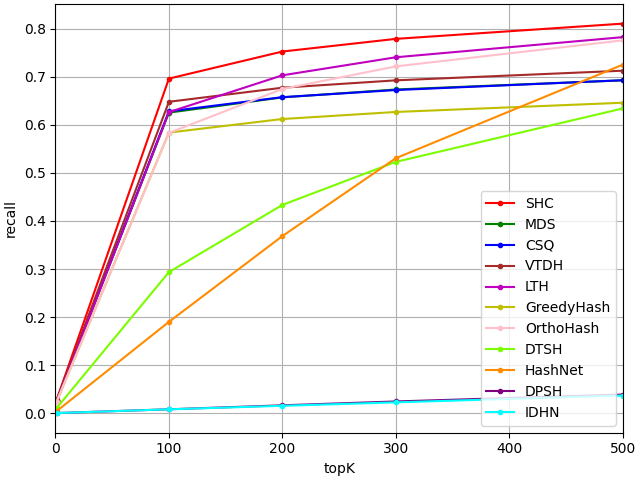}}
	
	%\vspace{-4mm}
	\centering\subfloat[16 bits, NABirds-B]{\includegraphics[width=3.7cm]{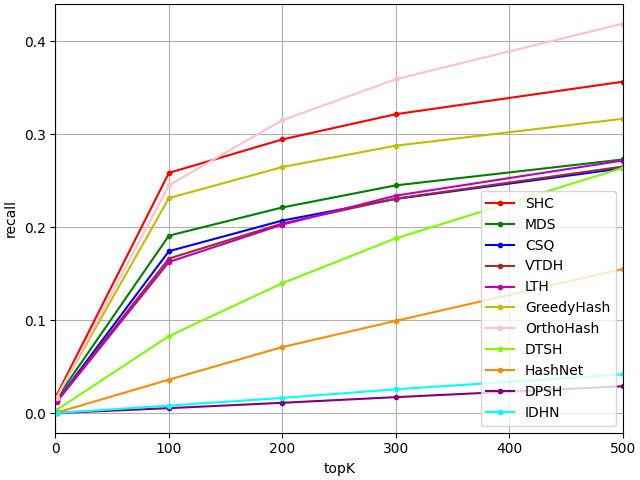}}\hspace{4mm}
	\centering\subfloat[32 bits, NABirds-B]{\includegraphics[width=3.7cm]{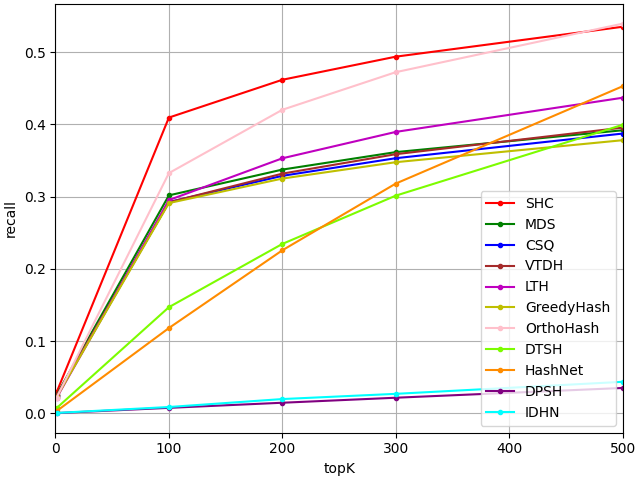}}\hspace{4mm}
	\centering\subfloat[64 bits, NABirds-B]{\includegraphics[width=3.7cm]{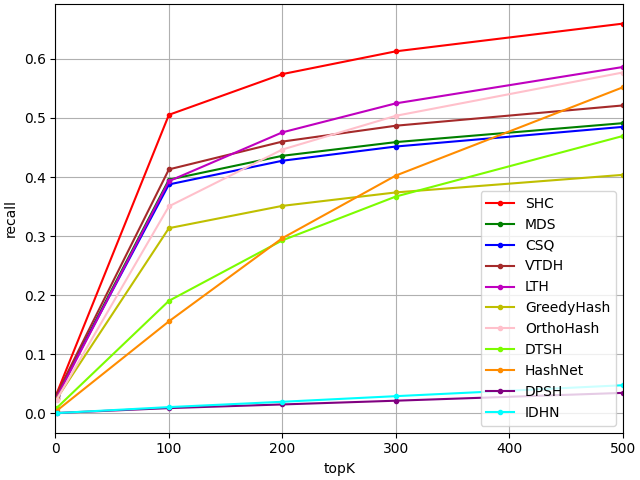}}
	\caption{The Recall@topK of different methods on CIFAR100, Stanford Cars, and NABirds datasets (16, 32, and 64 bits).}
	\label{fig:recall-at-topK}
\end{figure}

\subsection{Ablation Studies}

% 为了证明引入语义相似度矩阵以及限制哈希中心之间的最小距离是有效的，我们进行了消融实验，分别删去语义相似度和最小距离约束进行哈希实验。
We conduct ablation experiments by removing each component individually, i.e., the semantic and distance constraints, and also explore how the widely-used BERT-based or our data-dependent similarity matrices affect the final retrieval performances of the proposed SHC method. 

\subsubsection{Effectiveness of the Minimal Distance Constraint}

% 在这个section，我们删去了最小距离约束d，目标函数退化为如下所示：
To demonstrate the effectiveness of the distance constraint in optimization (\ref{optimization:objective}), we remove $\text{L}_{distance}$ and then there comes:
\begin{equation} \label{Ablation Objective Function}
\begin{split}
&\mathop{\min}_{\mathbf{H}} \text{L} = ||\mathbf{S} - \frac{1}{q}\mathbf{H}^T\mathbf{H}||^{2}_{F} \\
&\text{s.t.}
\begin{array}{ll}
    \mathbf{H} \in \{-1, +1\}^{q \times C}. &
\end{array}
\end{split}
\end{equation}

% 与stage2中的优化方法类似，我们先对式子进行改写：
Similar to the optimization method used in Section~\ref{stage2}, we first rewrite the optimization (\ref{Ablation Objective Function}) as follows:
\begin{equation}
\begin{split}
\mathop{\min}_{\mathbf{H}} \text{L} & = ||\mathbf{S} - \frac{1}{q}\mathbf{H}^T\mathbf{M}||^{2}_{F} + \mu ||\mathbf{H} - \mathbf{M}||^{2}_{F} \\
&\text{s.t.}
\left \{
\begin{array}{ll}
    \mathbf{H} \in \{-1, +1\}^{q \times C}, & \\
    \mathbf{M} \in \mathbb{R}^{q \times C}.
\end{array}
\right.
\end{split}
\end{equation}

We can tackle the above problem by iteratively fixing $\mathbf{H}$ and optimizing $\mathbf{M}$, and then fixing $\mathbf{M}$ and optimizing $\mathbf{H}$.

\textbf{Update $\mathbf{M}$}:
\begin{equation}
\mathbf{M} = 
\left(\frac{1}{q}\mathbf{S} +  \mu \mathbf{I}\right) \mathbf{H}\left(\frac{1}{q^2}\mathbf{H}^T\mathbf{H} + \mu \mathbf{I}\right);
\end{equation}

\textbf{Update $\mathbf{H}$}:
\begin{equation}
\mathbf{H} = 
\text{sign}\left[(\frac{1}{q}\mathbf{S} + \mu \mathbf{I})\mathbf{M}(\mathbf{M}^T\mathbf{M})^{-1}\right].
\end{equation}

Now we obtained the hash centers without the distance constraint, i.e., SHC without $\text{L}_{distance}$. 
The experimental results are collected in Table \ref{tab:Ablation Studies 1}.

By comparing the MAP values of SHC and ``SHC without $\text{L}_{distance}$'', it is evident that SHC, with both semantic and distance constraints, outputs larger than in most cases or at least comparable MAP values with ``SHC without $\text{L}_{distance}$'', only with semantic constraint, which validates the effectiveness of the distance constraint.

\subsubsection{Effectiveness of the Semantic Constraint}
% 在这个section，我们删去了相似度矩阵，这样，哈希实验便退化成了MDS，实验结果如表所示
To demonstrate the effectiveness of the semantic constraint in optimization (\ref{optimization:objective}), we remove $\text{L}_{semantic}$, which is equivalent to MDS~\cite{MDS}. 
The experimental results are collected in Table \ref{tab:Ablation Studies 1}.

By comparing the MAP values of SHC and MDS, it is evident that SHC, with both semantic and distance constraints, outputs larger MAP values than MDS, only with distance constraint, which validates the effectiveness of the semantic constraint.

\subsubsection{Effectiveness of Different Similarity Matrices}
We also compare the MAP results of the method using the BERT~\cite{BERT}-based similarity matrix and our SHC using the data-dependent similarity matrix. 

The process of generating the similarity matrix using BERT is as below.
For each class, its label is denoted as ${l}_c$ ($c=1,2,\cdots,C$), then we can get its semantic embedding via the pre-trained BERT model via
% \begin{equation} 
$\mathbf{e}_c = \text{BERT}({l}_c)$,
% \end{equation}
based on which, the similarity between any two classes $\mathbf{s}_{ij}$ can be computed by their cosine values, i.e.,
\begin{equation} 
\mathbf{s}_{ij}=\text{cos}(\mathbf{e}_i, \mathbf{e}_j)=\frac{\mathbf{e}_i^T\mathbf{e}_j}{||\mathbf{e}_i||\times||\mathbf{e}_j||} ;
\end{equation}
then the similarity matrix $\mathbf{S}=(\mathbf{s}_{ij})^{C \times C}$ based on BERT is prepared (i.e., each $\mathbf{s}_{ij}$ is normalized to (-1,+1)) for subsequent semantic hash centers generation, after which it follows the same process as SHC. These two methods are marked as ``BERT'' v.s. ``SHC''. Their experimental results are collected in Table~\ref{tab:Ablation Studies 2}.

Obviously, the BERT-based similarity matrix calculated with class labels is not that effective as our data-dependent similarity matrix using the inherent distributions of the images. 
This is because that class labels are relatively static compared with specific image datasets, and they can not adapt well to dynamic image datasets (e.g., for fixed class labels, we can curate various distributed image datasets); in addition, for each class, it can be labeled with various texts, this also can lead to different similarities between classes but with the same image datasets. 

In sharp contrast, our designed data-dependent similarity matrix is just based on all the image samples, free from class labels, and hence can well automatically adapt to the distribution of various datasets.

\begin{table*}[!ht]
    \centering
    \caption{The MAP@topK (topK=100, 1000, and ALL) values of our SHC approach w/o semantic $\text{L}_{semantic}$ or distance $\text{L}_{distance}$ components. Note that \textbf{d} with \checkmark means the distance constraint; likewise, \textbf{S} with \checkmark means the semantic constraint.}
    \label{tab:Ablation Studies 1}
	\setlength{\tabcolsep}{0.4mm}
	{
		\renewcommand\arraystretch{1.5}
        \small
        \begin{tabular}{c | cc | ccc | ccc | ccc | ccc | ccc}
        \toprule
            \multirow{2}{*}{\textbf{MAP@topK}} & \multirow{2}{*}{\textbf{d}} & \multirow{2}{*}{\textbf{S}} & \multicolumn{3}{c|}{\textbf{CIFAR-100}} & \multicolumn{3}{c|}{\textbf{Stanford Cars-A}} & \multicolumn{3}{c|}{\textbf{Stanford Cars-B}} & \multicolumn{3}{c|}{\textbf{NABirds-A}} & \multicolumn{3}{c}{\textbf{NABirds-B}}\\ 
            \cmidrule(lr{.75em}){4-6} \cmidrule(lr{.75em}){7-9} \cmidrule(lr{.75em}){10-12} \cmidrule(lr{.75em}){13-15} \cmidrule(lr{.75em}){16-18} 
            & & & 16 bits & 32 bits & 64 bits & 16 bits & 32 bits & 64 bits & 16 bits & 32 bits & 64 bits & 16 bits & 32 bits & 64 bits & 16 bits & 32 bits & 64 bits \\
            \cmidrule(lr{.75em}){1-1} \cmidrule(lr{.75em}){2-3} \cmidrule(lr{.75em}){4-6} \cmidrule(lr{.75em}){7-9} \cmidrule(lr{.75em}){10-12} \cmidrule(lr{.75em}){13-15} \cmidrule(lr{.75em}){16-18} 

            \multirow{3}{*}{\makecell[c]{MAP@100}}
            & \checkmark & ~ & 0.6128  & 0.6495  & 0.6579  & 0.5804  & 0.6787  & 0.7393  & 0.3434  & 0.4828  & 0.5907  & 0.4837  & 0.5809  & 0.6234  & 0.3187  & 0.4401  & 0.4903   \\
            & ~ & \checkmark & 0.6337  & 0.6580  & \textbf{0.6726}  & 0.6268  & 0.7280  & 0.7647  & 0.3999  & 0.5712  & 0.6025  & 0.4926  & 0.5921  & {0.6563}  & 0.3429  & 0.4410  & \textbf{0.5437}   \\
            & \checkmark & \checkmark & \textbf{0.6360}  & \textbf{0.6629}  & 0.6672  & \textbf{0.6674}  & \textbf{0.7448}  & \textbf{0.7776}  & \textbf{0.4440}  & \textbf{0.5859}  & \textbf{0.6288}  & \textbf{0.4986}  & \textbf{0.6117}  & \textbf{0.6586}  & \textbf{0.3552}  & \textbf{0.4753}  & 0.5289   \\
            \cmidrule(lr{.75em}){1-1} \cmidrule(lr{.75em}){2-3} \cmidrule(lr{.75em}){4-6} \cmidrule(lr{.75em}){7-9} \cmidrule(lr{.75em}){10-12} \cmidrule(lr{.75em}){13-15} \cmidrule(lr{.75em}){16-18} 

            \multirow{3}{*}{\makecell[c]{MAP@1000}}
            & \checkmark & ~ & 0.5692  & 0.6064  & 0.6087  & 0.5654  & 0.6698  & 0.7355  & 0.2678  & 0.4026  & 0.5194  & 0.4756  & 0.5761  & 0.6218  & 0.2466  & 0.3678  & 0.4290   \\
            & ~ & \checkmark & 0.5830  & 0.6117  & \textbf{0.6208}  & 0.6100  & 0.7199  & 0.7632  & 0.3129  & 0.4818  & 0.5274  & 0.4827  & 0.5869  & {0.6567}  & 0.2669  & 0.3655  & \textbf{0.4646}   \\
            & \checkmark & \checkmark & \textbf{0.5936}  & \textbf{0.6166}  & {0.6175}  & \textbf{0.6525}  & \textbf{0.7379}  & \textbf{0.7762}  & \textbf{0.3541}  & \textbf{0.5033}  & \textbf{0.5573}  & \textbf{0.4878}  & \textbf{0.6072}  & \textbf{0.6608}  & \textbf{0.2813}  & \textbf{0.3991}  & 0.4575   \\
            \cmidrule(lr{.75em}){1-1} \cmidrule(lr{.75em}){2-3} \cmidrule(lr{.75em}){4-6} \cmidrule(lr{.75em}){7-9} \cmidrule(lr{.75em}){10-12} \cmidrule(lr{.75em}){13-15} \cmidrule(lr{.75em}){16-18}

            \multirow{3}{*}{\makecell[c]{MAP@ALL}} 
            & \checkmark & ~ & 0.4308  & 0.4826  & 0.5158  & 0.5509  & 0.6566  & 0.7280  & 0.2012  & 0.3260  & 0.4518  & 0.4646  & 0.5670  & 0.6135  & 0.1568  & 0.2590  & 0.3215   \\
            & ~ & \checkmark & 0.4602  & 0.5055  & {0.5209}  & 0.5989  & 0.7127  & 0.7565  & 0.2501  & 0.4280  & 0.4792  & 0.4720  & 0.5789  & {0.6521}  & 0.1774  & 0.2577  & 0.3950   \\
            & \checkmark & \checkmark & \textbf{0.4838}  & \textbf{0.5186}  & \textbf{0.5262}  & \textbf{0.6433}  & \textbf{0.7300}  & \textbf{0.7720}  & \textbf{0.2976}  & \textbf{0.4547}  & \textbf{0.5139}  & \textbf{0.4755}  & \textbf{0.6001}  & \textbf{0.6566}  & \textbf{0.1926}  & \textbf{0.3185}  & \textbf{0.3988}   \\
         
            \bottomrule
        \end{tabular}
    }
\end{table*}

\subsection{Convergence Analysis}

Fig.~\ref{fig:loss-map} exhibits the changes of losses and MAP values w.r.t. the training epochs on the curated datasets. 
Obviously, we could observe that across all datasets, the losses on the whole decrease as the number of training epochs increases. 
Owing to the utilization of cosine annealing learning rate\footnote{\url{https://pytorch.org/docs/stable/generated/torch.optim.lr_scheduler.CosineAnnealingLR.html}} in our experiments, the initial learning rate is relatively large. 
As the training epochs progressively rise up, the learning rate gradually decreases; 
therefore, the losses exhibit fluctuations in the early epochs, but as the learning rate becomes much smaller, the losses become more stable. 
 
The trends of MAP values align with these losses, showing a gradual increase overall. 
During the training when the losses fluctuate, the MAPs also exhibit fluctuations. 
As the learning rate decreases, the MAP values tend to stabilize.

% cifar-100-new-seg
\begin{figure}[H]
	%\vspace{-3mm}
	\centering\subfloat[16 bits, CIFAR-100]{\includegraphics[width=3.7cm]{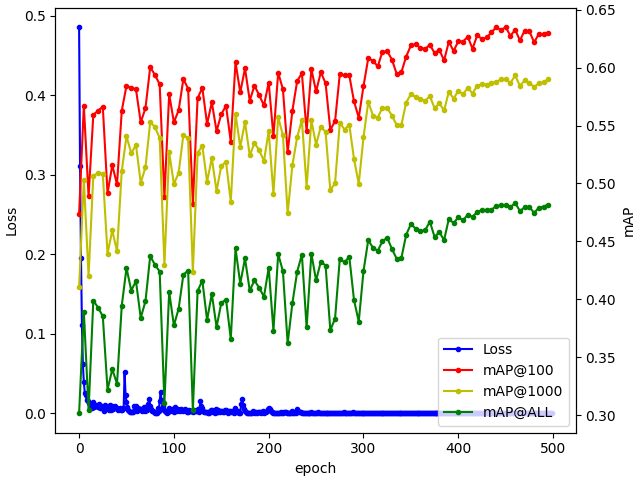}}\hspace{4mm}
	\centering\subfloat[32 bits, CIFAR-100]{\includegraphics[width=3.7cm]{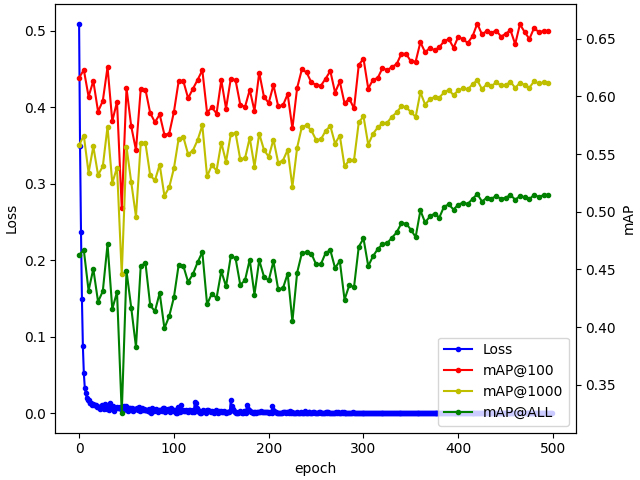}}\hspace{4mm}
	\centering\subfloat[64 bits, CIFAR-100]{\includegraphics[width=3.7cm]{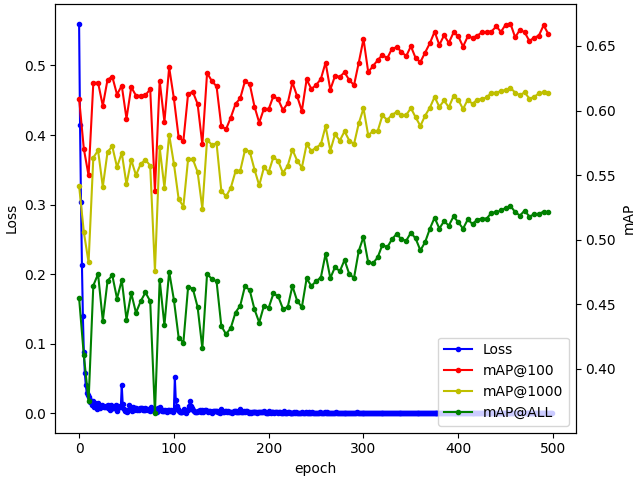}}
	
	%\vspace{-3mm}
	\centering\subfloat[16 bits, Stanford Cars-A]{\includegraphics[width=3.7cm]{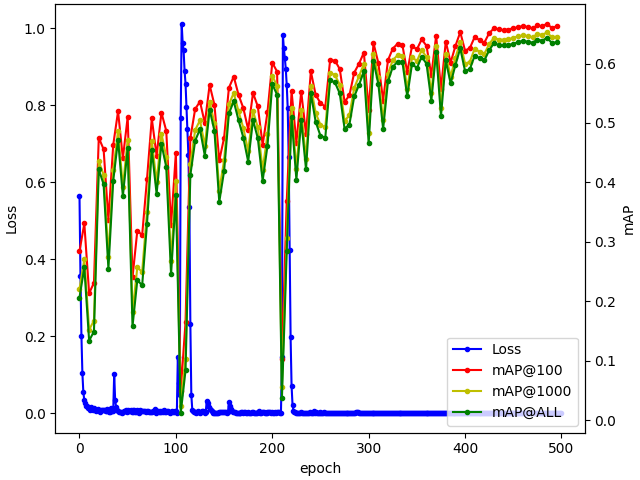}}\hspace{4mm}
	\centering\subfloat[32 bits, Stanford Cars-A]{\includegraphics[width=3.7cm]{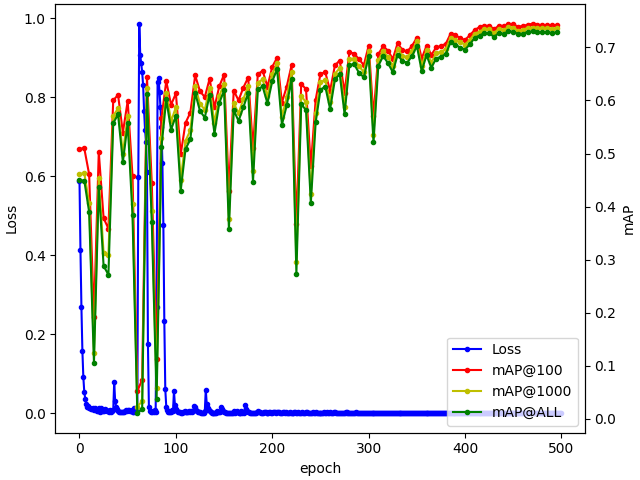}}\hspace{4mm}
	\centering\subfloat[64 bits, Stanford Cars-A]{\includegraphics[width=3.7cm]{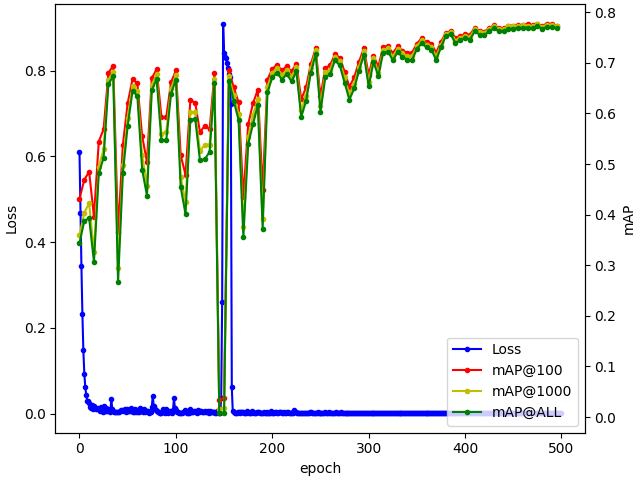}}
	
	%\vspace{-3mm}
	\centering\subfloat[16 bits, Stanford Cars-B]{\includegraphics[width=3.7cm]{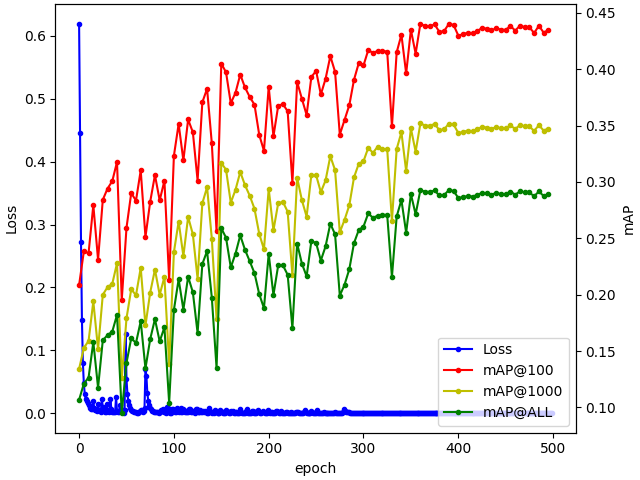}}\hspace{4mm}
	\centering\subfloat[32 bits, Stanford Cars-B]{\includegraphics[width=3.7cm]{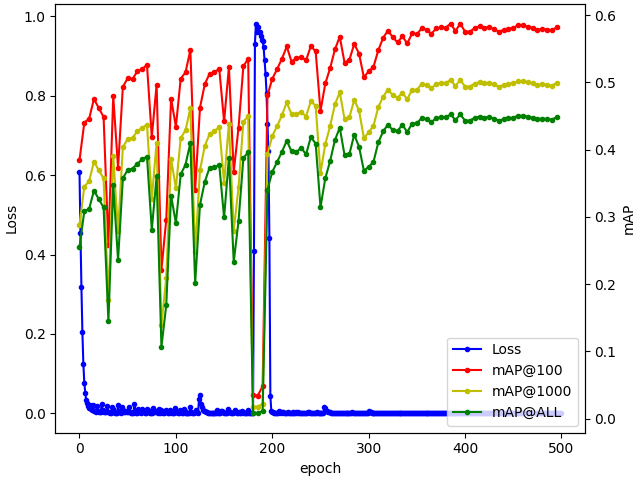}}\hspace{4mm}
	\centering\subfloat[64 bits, Stanford Cars-B]{\includegraphics[width=3.7cm]{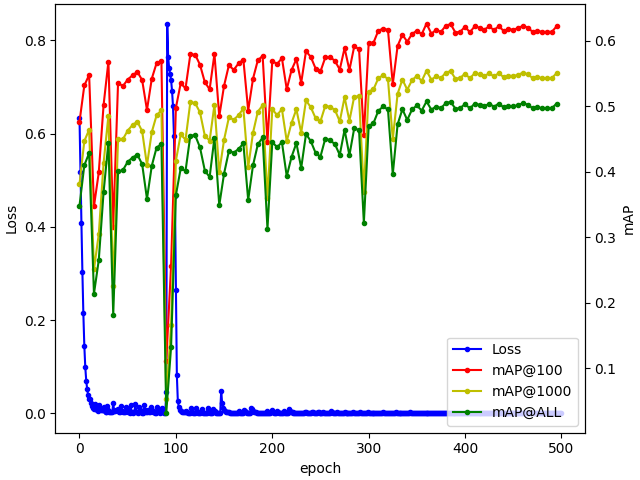}}
	
	%\vspace{-3mm}
	\centering\subfloat[16 bits, NABirds-A]{\includegraphics[width=3.7cm]{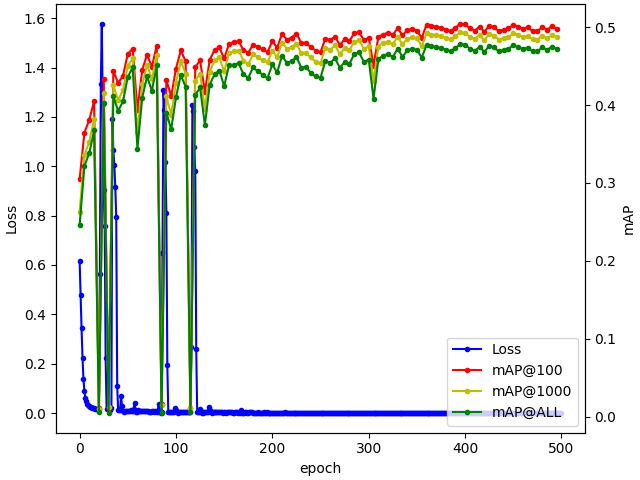}}\hspace{4mm}
	\centering\subfloat[32 bits, NABirds-A]{\includegraphics[width=3.7cm]{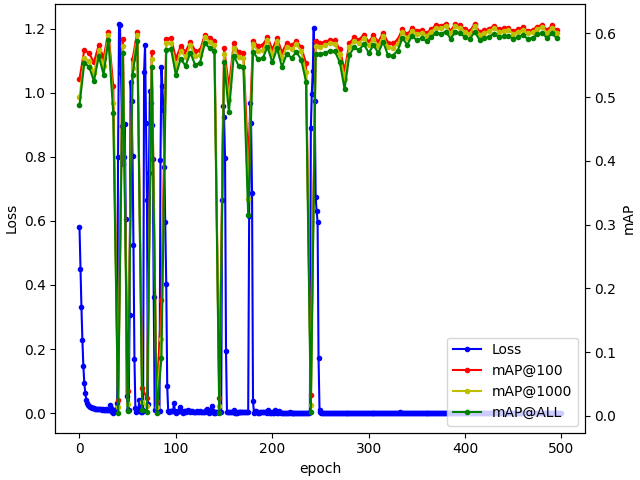}}\hspace{4mm}
	\centering\subfloat[64 bits, NABirds-A]{\includegraphics[width=3.7cm]{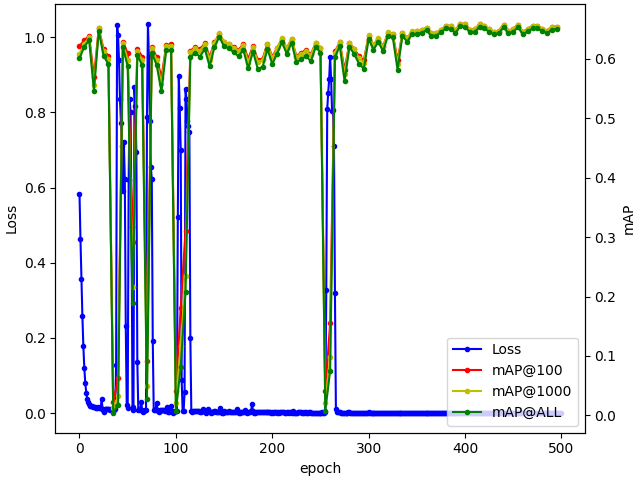}}
	
	%\vspace{-3mm}
	\centering\subfloat[16 bits, NABirds-B]{\includegraphics[width=3.7cm]{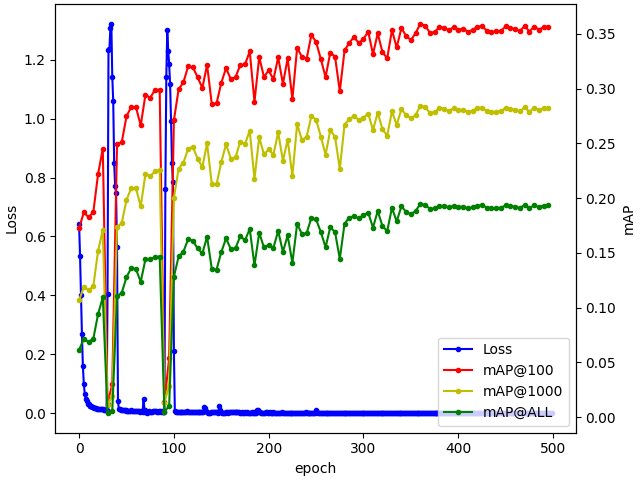}}\hspace{4mm}
	\centering\subfloat[32 bits, NABirds-B]{\includegraphics[width=3.7cm]{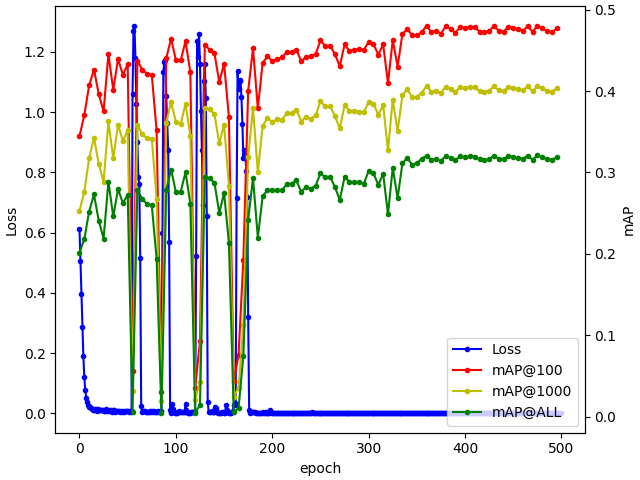}}\hspace{4mm}
	\centering\subfloat[64 bits, NABirds-B]{\includegraphics[width=3.7cm]{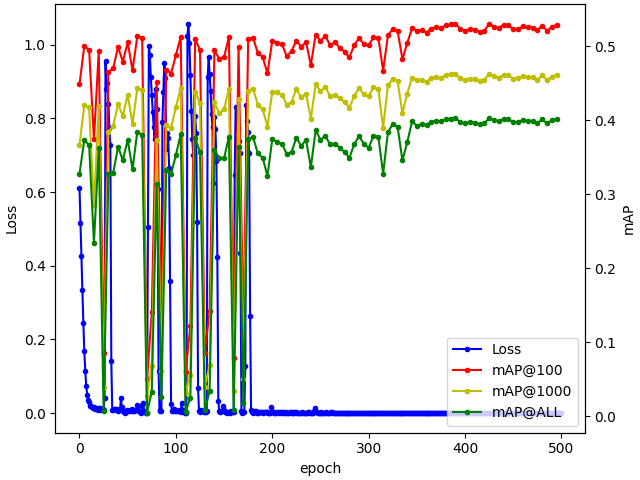}}
	\caption{The mAP@topK (topK=100, 1000, ALL) and loss curves w.r.t. training iterations on CIFAR100, Stanford Cars, and NABirds datasets.}
	\label{fig:loss-map}
\end{figure}

\begin{table}[ht]
    \centering
    \caption{Comparisons of the retrieval performance between the BERT-based and  data-dependent similarity matrices for semantic hash centers generation.}
    \label{tab:Ablation Studies 2}
	\setlength{\tabcolsep}{0.22mm}
	{
		\renewcommand\arraystretch{1.5}
        \small
        \begin{tabular}{c | c | ccc | ccc | ccc | ccc | ccc}
        \toprule
            \multirow{2}{*}{\textbf{MAP@topK}} & \multirow{2}{*}{\textbf{S type}} & \multicolumn{3}{c|}{\textbf{CIFAR-100}} & \multicolumn{3}{c|}{\textbf{Stanford Cars-A}} & \multicolumn{3}{c|}{\textbf{Stanford Cars-B}} & \multicolumn{3}{c|}{\textbf{NABirds-A}} & \multicolumn{3}{c}{\textbf{NABirds-B}}\\ 
            \cmidrule(lr{.75em}){3-5} \cmidrule(lr{.75em}){6-8} \cmidrule(lr{.75em}){9-11} \cmidrule(lr{.75em}){12-14} \cmidrule(lr{.75em}){15-17} 
            & & 16 bits & 32 bits & 64 bits & 16 bits & 32 bits & 64 bits & 16 bits & 32 bits & 64 bits & 16 bits & 32 bits & 64 bits & 16 bits & 32 bits & 64 bits \\
            \cmidrule(lr{.75em}){1-1} \cmidrule(lr{.75em}){2-2} \cmidrule(lr{.75em}){3-5} \cmidrule(lr{.75em}){6-8} \cmidrule(lr{.75em}){9-11} \cmidrule(lr{.75em}){12-14} \cmidrule(lr{.75em}){15-17} 

            \multirow{2}{*}{\makecell[c]{MAP@100}}
            & BERT & 0.6154  & 0.6497  & 0.6620  & 0.5911  & 0.6798  & 0.7348  & 0.3384  & 0.4830  & 0.5673  & 0.4509  & 0.5279  & 0.6222  & 0.3189  & 0.4427  & 0.4976   \\
            & SHC & \textbf{0.6360}  & \textbf{0.6629}  & \textbf{0.6672}  & \textbf{0.6674}  & \textbf{0.7448}  & \textbf{0.7776}  & \textbf{0.4440}  & \textbf{0.5859}  & \textbf{0.6280}  & \textbf{0.4986}  & \textbf{0.6117}  & \textbf{0.6586}  & \textbf{0.3552}  & \textbf{0.4753}  & \textbf{0.5289}   \\
            \cmidrule(lr{.75em}){1-1} \cmidrule(lr{.75em}){2-2} \cmidrule(lr{.75em}){3-5} \cmidrule(lr{.75em}){6-8} \cmidrule(lr{.75em}){9-11} \cmidrule(lr{.75em}){12-14} \cmidrule(lr{.75em}){15-17}

            \multirow{2}{*}{\makecell[c]{MAP@1000}} 
            & BERT & 0.5664  & 0.6025  & 0.6151  & 0.5757  & 0.6712  & 0.7297  & 0.2557  & 0.3950  & 0.4851  & 0.4445  & 0.5699  & 0.6225  & 0.2498  & 0.3720  & 0.4307   \\
            & SHC & \textbf{0.5936}  & \textbf{0.6166}  & \textbf{0.6175}  & \textbf{0.6525}  & \textbf{0.7379}  & \textbf{0.7762}  & \textbf{0.3541}  & \textbf{0.5033}  & \textbf{0.5573}  & \textbf{0.4878}  & \textbf{0.6072}  & \textbf{0.6608}  & \textbf{0.2813}  & \textbf{0.3991}  & \textbf{0.4575}   \\
            \cmidrule(lr{.75em}){1-1} \cmidrule(lr{.75em}){2-2} \cmidrule(lr{.75em}){3-5} \cmidrule(lr{.75em}){6-8} \cmidrule(lr{.75em}){9-11} \cmidrule(lr{.75em}){12-14} \cmidrule(lr{.75em}){15-17}

            \multirow{2}{*}{\makecell[c]{MAP@ALL}} 
            & BERT & 0.4263  & 0.4772  & 0.5096  & 0.5625  & 0.6593  & 0.7212  & 0.1965  & 0.3227  & 0.4196  & 0.4315  & 0.5509  & 0.6151  & 0.1613  & 0.2659  & 0.3370   \\
            & SHC & \textbf{0.4838}  & \textbf{0.5186}  & \textbf{0.5262}  & \textbf{0.6433}  & \textbf{0.7300}  & \textbf{0.7720}  & \textbf{0.2976}  & \textbf{0.4547}  & \textbf{0.5139}  & \textbf{0.4755}  & \textbf{0.6001}  & \textbf{0.6566}  & \textbf{0.1926}  & \textbf{0.3185}  & \textbf{0.3988}   \\
    
            \bottomrule
        \end{tabular}
    }
\end{table}

\section{Conclusion}

In this paper, we expand upon the newly emerged concept of ``hash centers'' to develop the more explainable ``\emph{semantic} hash centers'', proposing a novel deep hashing approach named SHC, structured around a three-stage learning procedure.
First, we develop a data-dependent, rather than label-based, pairwise similarity calculation method that can well adapt to various data distributions.
Second, we not only impose semantic constraints on the hash centers, but also set them apart as far as possible, and meanwhile compute a lower bound for the minimal distance between any two hash centers. 
Through the strategic use of proxy variables and by utilizing the ALM optimization framework, we are able to gradually decompose the complex optimization problem into several manageable sub-problems and solve them efficiently.
Third, we employ a classic deep hashing network that is supervised using the semantic hash centers derived above.
Extensive experiments on multiple datasets demonstrate that our SHC method outperforms several current state-of-the-art deep hashing techniques in image retrieval tasks.
Furthermore, ablation studies confirm the effectiveness of our devised data-dependent similarity matrix and the utilized optimization algorithm for generating semantic hash centers. 

\section*{Acknowledgments}

This work is supported in part by 
National Natural Science Foundation of China (Grant No.~62372054, 62006005) and 
National Key Research and Development Program of China (Grant No.~2022YFC3302200).

\bibliographystyle{ACM-Reference-Format}
\bibliography{sample-base}

%%
%% If your work has an appendix, this is the place to put it.
\appendix

\end{document}